\definecolor{lightred}{rgb}{1.0, 0.85, 0.86} 
\newtheorem{definition}{Definition}[section]
\newtheorem{theorem}{Theorem}
\newcommand{\doc}{\ensuremath{d}}
\newcommand{\egises}{{\texttt{EGISES}}}
\newcommand{\pacc}{{\texttt{P-Acc}}}
\newcommand{\degress}{{\texttt{DEGRESS}}}
\newcommand{\perseval}{{\texttt{PerSEval}}}
\newcommand{\adp}{{\texttt{ADP}}}
\newcommand{\acp}{{\texttt{ACP}}}
\newcommand{\edp}{{\texttt{EDP}}}
\newcommand{\dgp}{{\texttt{DGP}}}
\title{Accuracy leaderboards can mislead: On Evaluation of Personalized Summarizers}
\title{\perseval: A Novel Metric for Personalized Text Summarizer Assessment}
\title{{\em Accuracy or Beyond}? The Quest for Evaluating Personalization \\in Text Summarization Models}
\title{Beyond Accuracy: Rethinking Evaluation Metrics for Personalized \\ Text Summarization}
\title{\perseval: Assessing Personalization in Text Summarizers}
\author{Sourish Dasgupta\textsuperscript{1,*}, Ankush Chander\textsuperscript{1}, Parth Borad\textsuperscript{1}, Isha Motiyani\textsuperscript{1}, Tanmoy Chakraborty\textsuperscript{2,*}
  \\
  \textsuperscript{1}Dhirubhai Ambani Institute of Information \& Communication Technology, India\\
  \textsuperscript{2}Indian Institute of Technology Delhi, India\\
  Corresponding authors: \texttt{sourish\_dasgupta@daiict.ac.in}, \texttt{tanchak@iitd.ac.in}
}
\date{}
\begin{document}
\maketitle
\begin{abstract}
Personalized summarization models cater to individuals' subjective understanding of saliency, as represented by their reading history and current topics of attention. Existing personalized text summarizers are primarily evaluated based on accuracy measures such as BLEU, ROUGE, and METEOR. However, a recent study argued that accuracy measures are inadequate for evaluating the \textit{degree of personalization} of these models and proposed EGISES, the first metric to evaluate personalized text summaries. It was suggested that accuracy is a separate aspect and should be evaluated standalone. In this paper, we challenge the necessity of an accuracy leaderboard, suggesting that relying on accuracy-based aggregated results might lead to misleading conclusions. To support this, we delve deeper into EGISES, demonstrating both theoretically and empirically that it measures the {\em degree of responsiveness}, a necessary but not sufficient condition for degree-of-personalization. We subsequently propose \perseval, a novel measure that satisfies the required sufficiency condition. Based on the benchmarking of ten SOTA summarization models on the PENS dataset, we empirically establish that -- (i) \perseval \space is reliable w.r.t human-judgment correlation (Pearson's $r$ = 0.73; Spearman's $\rho$ = 0.62; Kendall's $\tau$ = 0.42), (ii) \perseval \space has high rank-stability, (iii) \perseval \space as a rank-measure is not entailed by EGISES-based ranking, and (iv) \perseval \space can be a standalone rank-measure without the need of any aggregated ranking.
\end{abstract}

\section{Introduction}\label{sec:Introduction}

With the incessant rise of information deluge, it has become even more imperative to develop efficient and accurate models to summarize the salient information in long documents for faster consumption, eliminating the irrelevant and supporting faster decision-making \cite{personalized-summarization-utility}. However, the notion of {\em saliency} can be highly subjective in many use cases, particularly for documents containing multiple aspects and topics. This calls for summarizers that must be personalized to the users' preferences as depicted by their reading behavior and current topic(s) of attention \citep{interactive-hf-based-2021,pens}. This calls for robust and reliable measures for evaluating the \textit{degree-of-personalization} in them. 

\textbf{Dearth of Personalization Evaluation.} Major studies on evaluating text summaries focus on accuracy measurement and include the proposal of a multitude of measures such as the ROUGE variants (e.g., ROUGE-\textit{n}/L/SU4 etc.) \citep{rouge}, BLEU \citep{bleu}, METEOR \citep{meteor}, BERTScore \citep{bertscore}, PYRAMID \citep{automated_pyramid} and the more recently proposed ones such as SUPERT \citep{supert}, WIDAR \citep{widar}, and InfoLM \citep{infolm}. Other studies acknowledged the need for more qualitative aspects such as consistency, coherence, and fluency \citep{multi-aspect_eval_BARTScore,multi-aspect_eval_CTC,multi-aspect_eval_FIB,multi-aspect_eval_ICE,multi-aspect_eval_UniEval}. Recently, \citet{egises} established, both theoretically and empirically, that personalization is a different aspect than accuracy. The authors proposed a measure for degree-of-personalization for the first time and called it \egises\ .

\begin{figure}[t]
\centerline{\includegraphics[width=0.55\textwidth]{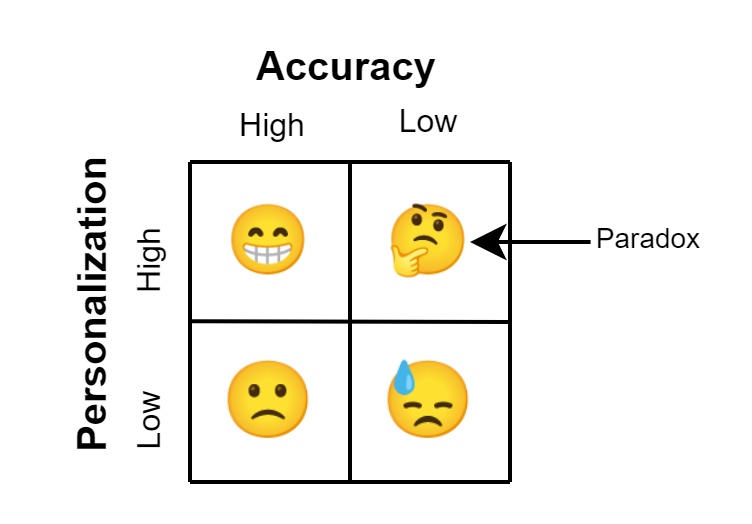}}
\caption{\textbf{EGISES Personalization-Accuracy Paradox.} The absurd case of high personalization (thereby high user-experience), yet low accuracy.}
\label{fig:personalization-accuracy-paradox}
\end{figure}

\textbf{The EGISES Paradox.}
To put it succinctly, EGISES measures the average ratio of the (normalized) deviation between model-generated summaries and their corresponding user-expected summaries, capturing the strong notion of a model's degree of insensitivity to users' subjective expectations. However, we argue that this degree of insensitivity does not truly measure personalization but rather a related and necessary aspect, which can be understood as the \textit{responsiveness} of a model. A high degree of personalization must imply a very high-quality user experience (UX). However, one cannot expect high UX while having low accuracy. At the same time, we demonstrate that a model can have high accuracy but a poor EGISES score. We resolve this apparent paradox by establishing both theoretically and empirically that EGISES, in reality, accounts for only the degree of responsiveness of models. In other words, we mathematically prove that a model can have a very good EGISES score but may still fall short of being personalized simply because of low accuracy performance.

\textbf{P-Accuracy is not a solution.} As an extension to standard accuracy measures, \citet{egises} proposed a measure called P-Accuracy (P-Acc) that penalizes accuracy by a factor that is a function of the EGISES score. Although P-Acc might apparently seem to address the EGISES paradox, in this paper, we prove that P-Acc fails to address the issue for two reasons. First, it has not been designed to evaluate the degree-of-personalization of a model but rather to get a more \textit{realistic, human-evaluator adjustable \underline{accuracy} score}. This makes P-Acc unable to distinguish the two models, where one exhibits low accuracy but high EGISES while the other exhibits high accuracy but low EGISES. Secondly, we prove that P-Acc is unstable for models with relatively low accuracy and may lead to anomalous results if we measure personalization using it. This motivates us to propose a novel consolidated personalization measurement framework for text summarizers, called \textbf{\perseval}\  (\textbf{Per}sonalized \textbf{S}ummarizer \textbf{Eval}uator) that builds on the design principles of EGISES and bridges the ``UX-gap". 

\textbf{\perseval \space Design Principle.}
The underlying design principle of \perseval \space entails that higher accuracy performance should not obfuscate the original EGISES score of a model. Otherwise, a model can be considered highly personalized simply because of its high accuracy, which is misleading, as was proven in \citep{egises}. However, the converse should not be true; hence, a lower accuracy score should penalize the original EGISES score. Based on these design objectives, we propose a penalty factor, called \edp \space (\textbf{E}ffective \textbf{D}EGRESS \textbf{P}enalty) that can be injected into EGISES to form \perseval \space to measure the true degree of personalization. \edp \space incorporates the inconsistency of accuracy of model w.r.t its best accuracy performance and how much that is off from the maximum achievable accuracy (which is 0 for a normalized metric). 
In the best case, the \edp \space factor comes to 1 (i.e., the penalty is 0), while in the worst case, it tends to 0 (i.e., the penalty is 1).

\textbf{Observations and Insights.}
We first empirically establish that the EGISES-based rank does not simply entail the \perseval-based rank. In other words, models rank differently when \edp \space is applied. Therefore, we can conclude that the EGISES-paradox not just theoretically exists but has real evidence. We then show that \perseval\ provides a much more reliable ranking of models with significantly higher human-judgment correlation in terms of Pearson's $r$ (0.73), Spearman's $\rho$ (0.62), and Kendall's $\tau$ (0.42). For fair comparisons, we consider the same top ten state-of-the-art news summarization models and the same PENS test dataset \citep{pens} that \citet{egises} considered to evaluate EGISES. We also take a step beyond \citep{egises} and demonstrate that {\em the accuracy leaderboard is not only insufficient but is at best redundant and at worst can be misleading} for evaluating personalized summarizers. This is established by showing that the Borda-Kendall consensus-based aggregated ranking \citep{borda-kendall-consensus} of the models does not have a better human-judgement correlation when compared to the correlation of \perseval\ alone.\footnote{Code: https://github.com/KDM-LAB/Perseval-TMLR} 

\section{Background}\label{sec:background}
\subsection{\egises \ is Not Enough: The Personalization-Accuracy Paradox}\label{sec:EGISES Paradox}
As proposed in \citep{egises}, the \textit{degree-of-personalization} is a quantitative measure of how much a summarization model fine-tuned for personalization is adaptive to a user's (i.e., reader's) subjective expectation. This also implies that it measures how accurately a model can capture the \textit{ user's "evolving" \textbf{profile} reflected through \textbf{reading history}} (i.e., a temporal span of the reading and skipping actions of a user on a sequence of documents that is interleaved by the actions of generating and reading summaries). This is because the \textit{\underline{subjective expectation is a function of the reading history}}. \textbf{A low degree of personalization, by definition, implies poor user experience}. If a model does not efficiently capture the user's profile, it may lead to summaries that contain (a) additional irrelevant information, (b) are not coherent with the reading history, and (c) do not cover all the topics of interest within the reading history. In this situation, poor UX would mean that the user would have to spend more time getting to the information he/she is interested in or suffer from information overload and fatigue. 

To illustrate this, we borrow the example given by \citet{egises} where we have two readers, Alice and Bob, having two very different reading histories over the same time span. Alice has been primarily reading news articles mostly on the broad topic of "\textit{civilian distress}" in the Hamas-Israeli conflict, while Bob is following up on articles related to "\textit{war-front battles}". In a situation where both Bob and Alice come up with an article covering both topics but rather wish to read the summary of the article instead, their respective expected summaries would be quite different. A non-personalized summarization model can be considered 100\% accurate w.r.t recall and F-score-based measures (e.g., ROUGE-variants and METEOR, respectively) if it generates a summary that is a union of the expected summaries of Alice and Bob. On the other hand, a model can be considered 100\% accurate w.r.t precision-based measures (eg., BLEU) if it covers the intersection between the expected summaries of Alice and Bob, which will not be much since they are quite different profiles. However, while the first situation would imply that Alice and Bob experience \textit{significantly redundant content} in the generated summary that would drop their overall UX, the second case would entail that both Alice and Bob will have poor UX in terms of \textit{significant lack of content coverage.} Hence, the utility of any summarization model as experienced by any user is directly associated with the subjective UX and cannot be traded in for accuracy, which is a rather clinical scoring mechanism that does not truly reflect the UX of a user in terms of redundancy and coverage (among other subjective aspects such as correctness and fluency). 

This phenomenon was established theoretically and empirically by \citet{egises} for the first time, thereby emphasizing the fact that \textit{accuracy measures can mislead an evaluator to select a model that has poor UX}. To address this, they proposed a novel measure, called EGISES, for personalization evaluation in summarizers. However, in this paper we establish both theoretically and empirically that if EGISES is used for personalization evaluation (i.e., a measure to understand a model's capacity to engage readers in terms of UX), then we can come to a rather paradoxical possibility where a model can have a high degree of personalization (i.e., acceptable EGISES score) but low accuracy, and yet, by definition, that would entail high UX (see Figure \ref{fig:personalization-accuracy-paradox}). In other words, although \textit{\textbf{high accuracy can lead to poor UX, the inverse (i.e., low accuracy leading to high UX) is absurd}}. We term this as the \textbf{personalization-accuracy paradox} and attribute it to the incorrect interpretation or usage of EGISES. In this paper, we propose \perseval \ as a \textbf{\underline{corrective} measure} of EGISES that resolves this paradox\footnote{\perseval \ should \textbf{not} be understood as an alternative "\textit{improved}" measure, and therefore, is not comparable to EGISES.}. In the following section, we show that EGISES measures \textit{responsiveness}, a necessary yet distinct attribute to personalization.   

\subsection{Personalization vs. Responsiveness} \label{sec:DEGRESS}
In this section, we first distinguish \textit{responsiveness} from \textit{personalization}. Informally, responsiveness is the capacity of a model to discern the differences in the profiles (i.e., reading histories) of two readers quantitatively and accurately predict the dissimilarity in their corresponding expected summaries that is proportionate to their profile difference. However, there can be scenarios where \textbf{a model exhibits high responsiveness at the cost of losing accuracy}. To illustrate this, we continue with the example from the previous section. Suppose we observe an arbitrary model to generate two different summaries for a given news article, one focusing on "\textit{Israeli Prime Minister}" and the other on "\textit{Jewish protests on war}", skipping the article's content on civilian distress and war-front battle information. In that case, we have to conclude that the model apparently discerned the difference between Alice and Bob's profiles, thereby \textbf{{predicting the proportionate dissimilarity}} in the expected summaries but not the expected summaries themselves. Thus, the model is \textit{inaccurate and yet responsive}. Therefore, interpreting such responsiveness as personalization leads us to the personalization-accuracy paradox. We prove this formally in Section \ref{sec:proof}.

We establish that EGISES measures how sensitive (or insensitive) a model is to the differences in the readers' subjective expectations (i.e., responsiveness) but not personalization. Therefore, EGISES can give a fairly good score to the model in the example. To elucidate this, we first define an Oracle personalized summarization model  as follows:

\begin{definition}[Personalized Summarization Oracle]\label{def:oracle model}
A summarization model $M_{\boldsymbol{\theta},h}$ (parameterized with $\theta$) is an Oracle if for specific $j$-th reader profile $h_j$ (i.e., reading history) it generates an optimal summary $s^*_{(d_i, h_j)}$ of the document $d_i$ (i.e., $M_{\boldsymbol{\theta},h}:(d_i,h_j) \mapsto s^*_{(d_i, h_j)}$), where $s^*_{(d_i, h_j)} \equiv s^{*}_{u_{ij}} \equiv u_{ij}$; $u_{ij}$ is the $j$-th reader's \textbf{subjective} expected summary of $d_i$ and is determined by $h_j$.
\end{definition}

We now recall the notion of \textit{insensitivity-to-subjectivity}, the foundation of EGISES, as in \citet{egises}:

\begin{definition}[Weak Insensitivity-to-Subjectivity] \label{def:weak insensitivity}
A summarization model $M_{\boldsymbol{\theta},h}$ is (weakly) Insensitive-to-Subjectivity w.r.t a given document $d_i$ and corresponding readers $j$ and $k$, if $\forall(u_{ij}, u_{ik})$, $(\sigma(u_{ij}, u_{ik}) \leq \tau_{max}^{U}) \iff
(\sigma(s_{u_{ij}}, s_{u_{ik}}) > \tau_{max}^{S})$, 
where $\sigma$ is an arbitrary distance metric defined on the metric space $\mathcal{M}$, where $d, u$ and $s$ are defined\footnote{$\sigma(u_{i}, u_{i}) = 0$; $\sigma(u_{i},  u_{j}) \in [0, 1]; \sigma$ satisfies positivity, reflexive, maximality, symmetry, and the triangle inequality.}, $\tau_{max}^{U}$ is the maximum limit for $u_{i}, u_{j}$ to be mutually indistinguishable, and $\tau_{max}^{S}$ is the maximum limit for $s_{u_i}, s_{u_j}$ to be mutually indistinguishable.
\end{definition}

\begin{definition}[Strong Insensitivity-to-Subjectivity] \label{def:strong insensitity}
A summarization model $M_{\boldsymbol{\theta},h}$ is (strongly) Insensitive-to-Subjectivity w.r.t a given document $d_i$ and corresponding readers  $j$ and $k$, if $\forall(u_{ij}, u_{ik})$, $M_{\boldsymbol{\theta},h}$ satisfies: (i) the condition of weak insensitivity, and (ii) $(\sigma(u_{ij}, u_{ik}) > \tau_{max}^{U}) \iff (\sigma(s_{u_{ij}}, s_{u_{ik}}) \leq \tau_{max}^{S})$.
\end{definition}

Based on this notion, \citet{egises} defined (summary-level) "\textbf{deviation}" of a model $M_{\boldsymbol{\theta},h}$. We generalize this to our notion of summary-level \textit{\textbf{Deg}ree-of-\textbf{R}esponsiven\textbf{ess}} (\degress), the measure for responsiveness, as follows: 

\begin{definition}[Summary-level \degress] \label{def:responsiveness}
    Given a document $d_i$ and $j$-th reader's expected summary $u_{ij}$, the summary-level responsiveness of a personalized model $M_{\boldsymbol{\theta},h}$, (denoted by \degress$(s_{u_{ij}}|(d_{i}, u_{ij}))$), is defined as the "\textbf{ {proportional divergence}}" between model-generated summary $s_{u_{ij}}$ of $d_i$ for $j$-th user from all other user-specific summary versions w.r.t a corresponding divergence of $u_{ij}$ from all other user-profiles. 
\end{definition} 

\degress$(s_{u_{ij}}|(d_{i}, u_{ij}))$ is formulated as follows: 
\begin{equation} 
    \begin{split}\label{eq:degress}
            &\degress(s_{u_{ij}}|(d_{i}, u_{ij})) = \frac{1}{|U_{d_i}|} \sum\limits_{k=1}^{|U_{d_i}|} \frac{min(X_{ijk}, Y_{ijk})+\epsilon}{max(X_{ijk},Y_{ijk})+\epsilon} \\ 
            & X_{ijk} = \frac{\exp(w(u_{ij} | u_{ik}))}{\sum\limits_{l=1}^{|U_{d_{i}}|}{\exp(w(u_{ij} | u_{il}))}} \cdot \sigma(u_{ij} , u_{ik}); \ \ \ \ Y_{ijk} = \frac{\exp(w(s_{u_{ij}} | s_{u_{ik}}))}{\sum\limits_{l=1}^{|U_{d_{i}}|}{\exp(w(s_{u_{ij}} | u_{il}))}} \cdot \sigma(s_{u_{ij}} , s_{u_{ik}})\\
            &{w(u_{ij} | u_{ik})  = \frac{\sigma(u_{ij} , u_{ik})}{\sigma(u_{ij} , d_{i})}}; \ \ \ \ \ \ {w(s_{u_{ij}} | s_{u_{ik}})  = \frac{\sigma(s_{u_{ij}} , s_{u_{ik}})}{\sigma(s_{u_{ij}} , d_{i})}};
    \end{split}
\end{equation}  

Here, $|\textbf{D}|$ is the total number of documents in the evaluation dataset, $|\textbf{U}|$ is the total number of users who created gold-reference summaries that reflect their expected summaries (and thereby, their subjective preferences), and $|\textbf{U}_{d_i}|$ ($=|\textbf{S}_{d_i}|$) is the number of users who created gold-references for document $d_i$. $w$ is the divergence of the model-generated summary $s_{u_{ij}}$ (and the corresponding expected summary $u_{ij}$) from  document $d_i$ itself in comparison to all the other versions. It helps to determine how much percentage (therefore, the softmax function) of the divergence (i.e., $\sigma(s_{u_{ij}}, s_{u_{ik}}$) should be considered for the calculation of \degress. If $s_{u_{ij}}$ is farther than $s_{u_{ik}}$ w.r.t $d_i$ then \degress$(s_{u_{ij}}|(d_{i}, u_{ij})) < $ \degress$(s_{u_{ik}}|(d_{i}, u_{ik}))$, implying that $M_{\boldsymbol{\theta},h}$ is more responsive to the $k$-th reader. A lower value of \degress$(s_{u_{ij}}|(d_{i}, u_{ij}))$ indicates that while reader-profiles are different, the generated summary $s_{u_{ij}}$ is very similar to other reader-specific summaries (or vice versa), and hence, is not responsive at the summary-level. The system-level \degress \space and EGISES have been formulated as follows:

\begin{equation}\label{eq:Degree-of-Responsiveness}
    \degress(M_{\boldsymbol{\theta},h}) = \frac{\sum\limits_{i=1}^{|\textbf{D}|}\frac{\sum\limits_{j=1}^{|\textbf{U}_{d_i}|} \degress(s_{u_{ij}}|(d_{i}, u_{ij}))}{|\textbf{U}_{d_i}|}}{|\textbf{D}|}; \ \ \ \egises(M_{\boldsymbol{\theta},h}) = 1 - \degress(M_{\boldsymbol{\theta},h})
\end{equation}

\egises\ measures the degree of \textbf{\underline{in}}sensitivity-to-subjectivity for relative benchmarking of how much models \textit{lack personalization} (i.e., a lower score is better within the range $[0,1]$) instead of assigning an absolute goodness score. As can be noted, the \textbf{EGISES formalism does not enforce any penalty on accuracy drop}. Here, accuracy would be an inverse function of $\sigma(s_{u_{ij}}, u_{ij})|d_{i}$ for the same metric distance $\sigma$ that \degress \ uses. Hence, EGISES (and \degress) should be interpreted as a measure of responsiveness (i.e., proportionate divergence) and not personalization.  

\begin{figure}[t]
\centerline{\includegraphics[width=0.6\textwidth]{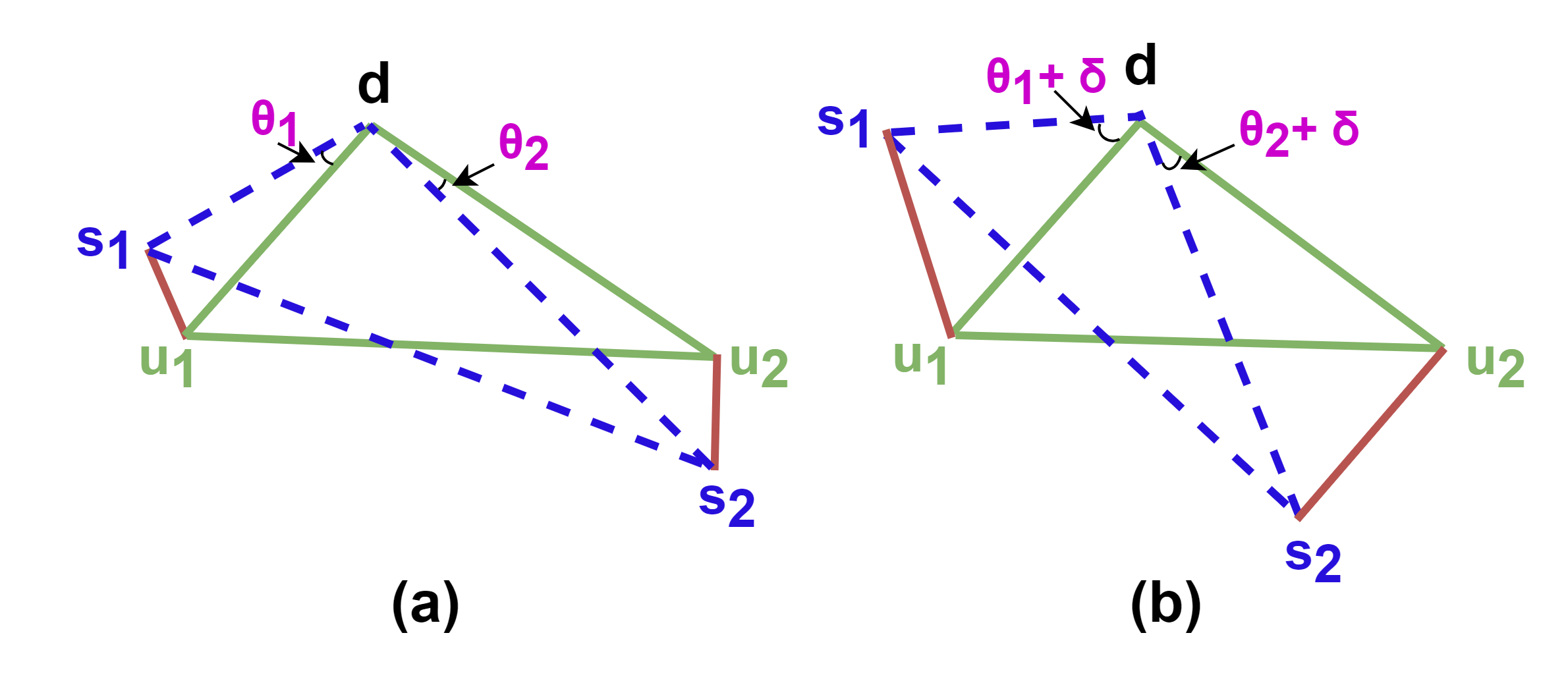}}
\vspace{-3mm}
\caption{\textbf{Existence of EGISES Personalization-Accuracy Paradox:} High (and same; (a)) \degress, yet low accuracy (red line; (b)).}
\label{fig:accparadoxproof}
\vspace{-3mm}
\end{figure}

\section{EGISES Personalization-Accuracy Paradox: Formal Proof}\label{sec:proof}
In this section, we mathematically prove the existence of the condition that, for sufficiently high \degress \ (and thereby EGISES), there exists low accuracy.

\begin{theorem}
The accuracy $f^{-1}(\sigma(s_{u}, u))$ of a model $M_{\boldsymbol{\theta},h}$ on the metric space $\mathcal{M}$ can be changed without any change in \degress$(s_{u}|(d_{i}, u))$.
\end{theorem}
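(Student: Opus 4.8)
The plan is to exploit a structural feature of the definition: the accuracy term $\sigma(s_{u}, u)$ does not appear anywhere in $\degress(s_{u_{ij}}|(d_{i}, u_{ij}))$. Inspecting Equation~\ref{eq:degress}, the model-generated summaries enter $\degress$ only through $Y_{ijk}$, which is assembled from the inter-summary distances $\sigma(s_{u_{ij}}, s_{u_{ik}})$ and the summary-to-document distances $\sigma(s_{u_{ij}}, d_{i})$ (the latter via the weight $w$). The companion term $X_{ijk}$ and its weights depend solely on the expected summaries $u_{ik}$ and the document $d_i$. Hence $\degress$ is a function of the families $\{\sigma(s_{u_{ij}}, s_{u_{ik}})\}_{k}$, $\{\sigma(s_{u_{ij}}, d_i)\}$, and the (fixed) profile geometry, but is entirely \emph{blind} to the generated-to-expected distances $\sigma(s_{u_{ij}}, u_{ij})$ that the accuracy $f^{-1}(\sigma(s_u,u))$ measures. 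First I would make this dependency explicit, rewriting $\degress$ as a function of precisely these two families of distances so as to isolate the quantities that must be held constant from the one we wish to vary.

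Next I would exhibit a transformation of the generated summaries that freezes the former while perturbing the latter. The natural device is an isometry $\phi$ of the metric space $\mathcal{M}$ that fixes the document point, $\phi(d_i)=d_i$ (concretely, a rotation about $d_i$ in a Euclidean or spherical embedding of $\mathcal{M}$). Defining the perturbed summaries $s'_{u_{ij}} := \phi(s_{u_{ij}})$, the isometry property yields $\sigma(s'_{u_{ij}}, s'_{u_{ik}}) = \sigma(s_{u_{ij}}, s_{u_{ik}})$ and $\sigma(s'_{u_{ij}}, d_i) = \sigma(\phi(s_{u_{ij}}), \phi(d_i)) = \sigma(s_{u_{ij}}, d_i)$, so every weight $w(s'_{u_{ij}}|s'_{u_{ik}})$, every $Y_{ijk}$, and therefore $\degress$ itself are left unchanged; the terms $X_{ijk}$ are untouched because the $u_{ik}$ and $d_i$ are never moved. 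Since the $u_{ij}$ stay fixed while the $s_{u_{ij}}$ are rotated about $d_i$, the accuracy distances $\sigma(s'_{u_{ij}}, u_{ij})$ will in general differ from $\sigma(s_{u_{ij}}, u_{ij})$, which establishes the claim. A pleasant side benefit is that an isometry automatically respects the metric axioms and the range $\sigma \in [0,1]$, so no feasibility constraint is violated.

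The main obstacle is guaranteeing that such a $\phi$ genuinely \emph{moves} accuracy rather than returning it unchanged: in a degenerate configuration (for instance all points collinear with $d_i$, or an $\mathcal{M}$ whose stabilizer of $d_i$ is trivial) a distance-preserving map fixing $d_i$ could leave every $\sigma(s_{u_{ij}}, u_{ij})$ invariant. I would close this gap by exhibiting one concrete low-dimensional configuration with enough room, matching the geometric picture of Figure~\ref{fig:accparadoxproof}, in which $d_i$, the $u_{ij}$, and the $s_{u_{ij}}$ are positioned so that a rotation about $d_i$ provably alters at least one $\sigma(s_{u_{ij}}, u_{ij})$ while the $\degress$ value stays fixed. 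Because the theorem is an existence statement, this single explicit witness suffices, and it also furnishes the ``same $\degress$, different accuracy'' instance depicted in the figure.
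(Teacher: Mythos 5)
Your proposal is correct and follows essentially the same route as the paper: the paper's proof also fixes $d_i$ and the $u_{ij}$ and applies a common rotation $rot(\bullet, d_i, \delta)$ to the generated summaries, which is exactly your isometry fixing $d_i$, preserving all inter-summary and summary-to-document distances (hence \degress) while varying $\sigma(s_u, u)$. Your write-up is in fact somewhat more careful than the paper's, since you make the dependency structure of Equation~\ref{eq:degress} explicit and flag the degenerate-configuration issue that the paper glosses over.
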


\begin{proof}
We follow the same triangulation proof technique as in \citep{egises}. Let $d, u$ and $s$ be triangulated as per Figure \ref{fig:accparadoxproof}. Keeping $d$ and $u$ fixed, we can perform an arbitrary rotation operation with $d$ as center ($rot(\bullet, d, \delta)$; $\delta$: angle of rotation) on $s_{u_{ij}}$ and $s_{u_{ik}} \text{ s.t. } rot(\bullet, d, \delta)$ is a closure operator in $\mathcal{M}$. Now, $\exists (p,q) \in \mathcal{M}$, s.t. 
    \begin{equation*}
    \begin{split}
        &\max\limits_{p}\sigma(rot(s_{u_{ij}}, d_i, \delta_p), u_{ij}) > \min\limits_{p}\sigma(rot(s_{u_{ij}}, u_{ij}, \delta_p), u_{ij});\\
        \text{And similarly,} &\max\limits_{q}\sigma(rot(s_{u_{ik}}, d_i, \delta_q), u_{ik}) > \min\limits_{q}\sigma(rot(s_{u_{ik}}, u_{ik}, \delta_q), u_{ik})
    \end{split}
    \end{equation*}
If $p=q$, then \degress$(s_{u_{i\bullet}}|(d_{i}, u_{i\bullet}))$ (and therefore, EGISES) remains unchanged for a given $\doc_i$. However, due to the existence of a total ordering of accuracy $f^{-1}(\sigma(s,u))$, for any arbitrary $\alpha$, the accuracy, therefore, can be varied by changing $\delta$ from a minima to a maxima ($f^{-1}(\sigma(s,u)) \in [0,1]$).
\end{proof}

The proof establishes the theoretical existence of the personalization-accuracy paradox if we interpret EGISES as a measure of personalization instead of responsiveness. It sets the motivation to design a corrective measure for EGISES that truly measures personalization. As discussed in the introduction, \citet{egises} proposed a follow-up measure called P-Accuracy (P-Acc), that might seem to solve this problem. However, some major limitations of P-Acc prevent it from being used for evaluating degree-of-personalization. We elucidate that in the following section.

\section{Limitations of P-Accuracy (P-Acc) as a Solution to the EGISES Paradox}\label{sec:limitations of P-Acc}
In this section, we prove that P-Acc, as proposed in \cite{egises}, cannot be used as a solution to the EGISES paradox. We begin with a recap of the formulation of P-Acc as follows:

\begin{equation} \label{eq:p-accuracy}
\begin{split}
       & P\text{-}Acc(M_{\boldsymbol{\theta},h}) = \text{Score}_{Acc}(M_{\boldsymbol{\theta},h}) \cdot \text{Unit}_{P\text{-}Acc}\\
       & \text{where: } \text{Unit}_{P\text{-}Acc} = 1 - [\alpha \cdot (\frac{f_{sig}(\beta \cdot \egises(M_{\boldsymbol{\theta},h}))}{\text{Score}_{Acc}(M_{\boldsymbol{\theta},h})})]; \text{Score}_{Acc}: \text{Acc. score w.r.t a chosen accuracy measure}
\end{split}
\end{equation}
Here, $\beta \in (0,1]$ calibrates the importance to be given to lack of personalization in a model while $\alpha \in [0,1]$ is the control that helps the human-evaluator to regulate the final penalty \underline{\textbf{to accuracy}} due to lack of personalization. According to P-Acc, it \textit{\underline{apparently}} seems that there cannot be a situation where if both accuracy and EGISES scores are low (i.e., \degress \ is high), the final score will be high, thereby preventing the EGISES Paradox. However, three limiting properties of P-Acc render it unusable for personalization evaluation. We prove their existence as follows:   

\begin{theorem}[\textbf{Limitation 1}: Lack of Interpretability of Score]\label{th:P-Acc cannot discriminate}
    There exists infinite possible model collections $\Theta = \{ \Theta_i \mid \Theta_i = \{M_{\boldsymbol{\theta}_{i\bullet},h}\}$ where $\forall (k,l) \ \text{pair}: \text{P-Acc}(M_{\boldsymbol{\theta}_{ik},h})=\text{P-Acc}(M_{\boldsymbol{\theta}_{il},h})$ when (i) $\text{Score}_{Acc}(M_{\boldsymbol{\theta}_{ik},h}) \gg \text{Score}_{Acc}(M_{\boldsymbol{\theta}_{il},h})$, (ii) $\text{Score}_{Acc}(M_{\boldsymbol{\theta}_{ik},h}) \ll \text{Score}_{Acc}(M_{\boldsymbol{\theta}_{il},h})$, and (iii) $\text{Score}_{Acc}(M_{\boldsymbol{\theta}_{ik},h}) = \text{Score}_{Acc}(M_{\boldsymbol{\theta}_{il},h})$. 
\end{theorem}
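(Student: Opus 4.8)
The plan is to first collapse the defining expression for P-Acc so as to expose that, despite its quotient appearance, it conceals a purely additive dependence on accuracy. Substituting $\text{Unit}_{P\text{-}Acc}$ from Equation~\eqref{eq:p-accuracy} into the product yields
\begin{equation*}
\text{P-Acc}(M_{\boldsymbol{\theta},h}) = \text{Score}_{Acc}(M_{\boldsymbol{\theta},h})\left(1 - \alpha\,\frac{f_{sig}(\beta\cdot\egises(M_{\boldsymbol{\theta},h}))}{\text{Score}_{Acc}(M_{\boldsymbol{\theta},h})}\right) = \text{Score}_{Acc}(M_{\boldsymbol{\theta},h}) - \alpha\, f_{sig}(\beta\cdot\egises(M_{\boldsymbol{\theta},h})).
\end{equation*}
The cancellation of $\text{Score}_{Acc}$ in the denominator is the crux: P-Acc is nothing but the accuracy score minus a penalty term that depends on EGISES alone. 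Abbreviating $a := \text{Score}_{Acc}(M)$ and $E := \egises(M)$, we may regard P-Acc as the function $P(a,E) = a - \alpha\, f_{sig}(\beta E)$, whose level sets $\{P(a,E)=c\}$ are exactly the curves $a = c + \alpha\, f_{sig}(\beta E)$.

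Next I would use these level sets to manufacture the three regimes at once. Fix any target value $c$ in the interior of the attainable range and let $\Theta_i$ be the set of all models lying on the curve $a = c + \alpha f_{sig}(\beta E)$; by construction every member has $\text{P-Acc} = c$, so every pair $(k,l)$ drawn from $\Theta_i$ satisfies $\text{P-Acc}(M_{\boldsymbol{\theta}_{ik},h}) = \text{P-Acc}(M_{\boldsymbol{\theta}_{il},h})$. Since $f_{sig}$ is continuous and strictly increasing and $E$ ranges over $[0,1]$, the induced accuracy $a$ is strictly increasing in $E$ along this curve. Hence, within this single collection, a pair with $E_k$ and $E_l$ chosen far apart (say $E_k$ near $1$, $E_l$ near $0$) forces $a_k$ to substantially exceed $a_l$, realizing case~(i); relabeling the two models gives $a_k \ll a_l$, i.e. case~(ii); and taking $E_k = E_l$ forces $a_k = a_l$, i.e. case~(iii). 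In every instance the two P-Acc values coincide by construction, so P-Acc is blind to which of the three accuracy orderings actually holds — precisely the claimed lack of interpretability.

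The only non-cosmetic step is realizability: the statement concerns genuine models $M_{\boldsymbol{\theta},h}$, not abstract pairs $(a,E)$, so I must exhibit summarizers attaining the prescribed coordinates. This is supplied by the paradox theorem of Section~\ref{sec:proof}: its rotation argument shows that for a fixed responsiveness (hence fixed \degress, hence fixed $\egises$) the accuracy $f^{-1}(\sigma(s,u))$ can be driven to any value in $[0,1]$. Thus, for each chosen $E$ I can select a model whose accuracy equals the required $a = c + \alpha f_{sig}(\beta E)$, provided $c$ is picked so that this quantity stays in $[0,1]$, while distinct $\boldsymbol{\theta}$-parameterizations yielding the same $(a,E)$ furnish genuinely different members (needed for case~(iii)). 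Finally, since $c$ may be taken anywhere in a subinterval of $[0,1]$ — and $\alpha,\beta$ themselves may be varied — there is a continuum of such level curves, each furnishing its own collection $\Theta_i$, which establishes the claimed infinitude of model collections $\Theta$. I expect the realizability bookkeeping (ensuring $a = c + \alpha f_{sig}(\beta E) \in [0,1]$ on the chosen portion of each curve) to be the sole point needing care; the algebraic collapse and the monotonicity argument are routine.
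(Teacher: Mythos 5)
Your proof is correct, and it rests on the same pivotal observation as the paper's: the $\text{Score}_{Acc}$ in the denominator of $\text{Unit}_{P\text{-}Acc}$ cancels, leaving $P\text{-}Acc = \text{Score}_{Acc} - \alpha\, f_{sig}(\beta\cdot\egises)$, so equality of P-Acc constrains only the difference of accuracies to equal $\alpha$ times the difference of the $f_{sig}(\beta\cdot\egises)$ terms --- a quantity that can be positive, negative, or zero. Where you diverge is in packaging: the paper proceeds case by case, assuming each accuracy ordering and checking that the induced constraint on EGISES is ``a possible condition'' (a consistency check, with some loose inequality chaining along the way), whereas you construct the level curves $a = c + \alpha f_{sig}(\beta E)$ explicitly and read off all three cases from the monotonicity of $f_{sig}$ within a single collection, then obtain the claimed infinitude by varying $c$. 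You also supply a realizability step the paper leaves implicit --- invoking the rotation argument of Section~\ref{sec:proof} to show that any prescribed accuracy can be attained at a fixed \degress, hence fixed \egises\ --- which is a genuine improvement, since the theorem quantifies over model collections rather than abstract score pairs. One caveat applies equally to both arguments: along any single level curve the accuracy spread is bounded by $\alpha\,\bigl(f_{sig}(\beta)-f_{sig}(0)\bigr)$, so the ``$\gg$'' of case (i) can only be realized multiplicatively (e.g., by placing the curve near $a=0$ so that the ratio $a_k/a_l$ becomes arbitrarily large), which is exactly the constraint hidden in the paper's ``sufficiently large $k$'' assumption; your closing remark about keeping $c + \alpha f_{sig}(\beta E)$ inside $[0,1]$ should be extended to make this explicit.
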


\begin{proof}
    \begin{equation*}\small
        \begin{split}
            &  P\text{-}Acc(M_{\boldsymbol{\theta},h}) = \text{Score}_{Acc}(M_{\boldsymbol{\theta},h}) - \alpha \cdot f_{sig}(\beta \cdot \egises(M_{\boldsymbol{\theta},h})), \ \ \ \ {\text{By definition}}\\
            & \therefore \forall (k,l) \ \text{pair}: P\text{-}Acc(M_{\boldsymbol{\theta}_{ik},h})= P\text{-}Acc(M_{\boldsymbol{\theta}_{il},h}) \\ & \implies \text{Score}_{Acc}(M_{\boldsymbol{\theta}_{ik},h}) - \alpha \cdot f_{sig}(\beta \cdot \egises(M_{\boldsymbol{\theta}_{ik},h})) = \text{Score}_{Acc}(M_{\boldsymbol{\theta}_{il},h}) - \alpha \cdot f_{sig}(\beta \cdot \egises(M_{\boldsymbol{\theta}_{il},h}))\\
            & \implies \text{Score}_{Acc}(M_{\boldsymbol{\theta}_{ik},h}) - \text{Score}_{Acc}(M_{\boldsymbol{\theta}_{il},h}) =  \alpha \cdot (f_{sig}(\beta \cdot \egises(M_{\boldsymbol{\theta}_{ik},h})) - f_{sig}(\beta \cdot \egises(M_{\boldsymbol{\theta}_{il},h})))\\\\
            & \textbf{Case 1/2:} \ \text{Let us assume that a sufficiently large $k$ exists s.t. } \text{Score}_{Acc}(M_{\boldsymbol{\theta}_{ik},h}) = k \times \text{Score}_{Acc}(M_{\boldsymbol{\theta}_{il},h}) \\
            & \therefore f_{sig}(\beta \cdot \egises(M_{\boldsymbol{\theta}_{ik},h})) - f_{sig}(\beta \cdot \egises(M_{\boldsymbol{\theta}_{il},h})) = \frac{k-1}{\alpha} \cdot \text{Score}_{Acc}(M_{\boldsymbol{\theta}_{il},h})\\
            & \implies f_{sig}(\beta \cdot \egises(M_{\boldsymbol{\theta}_{ik},h})) - f_{sig}(\beta \cdot \egises(M_{\boldsymbol{\theta}_{il},h})) \ge k-1 \ \ \ {\because \alpha \in [0,1]}\\
            & \implies \egises(M_{\boldsymbol{\theta}_{ik},h}) - \egises(M_{\boldsymbol{\theta}_{il},h}) \ge k-1\\
            & \implies \egises(M_{\boldsymbol{\theta}_{ik},h}) = k' + \egises(M_{\boldsymbol{\theta}_{il},h}) \ \ - \text{which is a possible condition. Hence, the assumption is valid.}\\\\
            & \textbf{Case 3: } \text{Let us assume } \text{Score}_{Acc}(M_{\boldsymbol{\theta}_{il},h},h) = \text{Score}_{Acc}(M_{\boldsymbol{\theta}_{il},h}) \ \ \ \ \ \  {k=1}\\
            & \therefore \egises(M_{\boldsymbol{\theta}_{ik},h}) \ge \egises(M_{\boldsymbol{\theta}_{il},h}) \ \  - \text{which is a possible condition. Hence, the assumption is valid.}
        \end{split}
    \end{equation*}
\end{proof}
The proof entails that P-Acc scores cannot discriminate the three cases.  

\begin{theorem}[\textbf{Limitation 2}: Lack of Interpretability of Lower Bound]\label{th:P-Acc < 0}
    There exists a condition when $\text{P-Acc}(M_{\boldsymbol{\theta}_{i\bullet},h})<0$ even when $\text{Score}_{Acc}(M_{\boldsymbol{\theta}_{i\bullet},h})\gg 0$.   
\end{theorem}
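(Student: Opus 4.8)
The plan is to reuse the additive reformulation of P-Acc derived in the proof of Theorem~\ref{th:P-Acc cannot discriminate}, namely $P\text{-}Acc(M_{\boldsymbol{\theta},h}) = \text{Score}_{Acc}(M_{\boldsymbol{\theta},h}) - \alpha \cdot f_{sig}(\beta \cdot \egises(M_{\boldsymbol{\theta},h}))$. The crucial structural fact is that, once simplified, the personalization penalty $\alpha \cdot f_{sig}(\beta \cdot \egises)$ is \emph{subtracted additively} and is no longer rescaled by $\text{Score}_{Acc}$. Consequently the negativity condition $P\text{-}Acc(M_{\boldsymbol{\theta},h}) < 0$ is equivalent to the single scalar inequality $\text{Score}_{Acc}(M_{\boldsymbol{\theta},h}) < \alpha \cdot f_{sig}(\beta \cdot \egises(M_{\boldsymbol{\theta},h}))$, and the entire theorem reduces to showing that the right-hand side can strictly exceed an accuracy score that is nonetheless ``significantly positive''.

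Next I would lower-bound the penalty over the admissible parameter ranges. Since $\beta \in (0,1]$ and $\egises \in [0,1]$, the sigmoid argument $\beta \cdot \egises$ lies in $[0,1]$, so for the logistic $f_{sig}$ we get $f_{sig}(\beta \cdot \egises) \in [f_{sig}(0), f_{sig}(1)] = [\tfrac{1}{2}, \tfrac{1}{1+e^{-1}}]$, i.e.\ roughly $[0.5,\,0.731]$. Hence for $\alpha$ near $1$ the penalty is bounded below by about $0.5$ and can climb to $\approx 0.731$ --- a magnitude comparable to, or larger than, the accuracy values actually attained by SOTA summarizers under ROUGE/BLEU/METEOR, where a score in the $0.3$--$0.5$ band already counts as strong performance.

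I would then exhibit an explicit witness. Fixing $\alpha = 1$, $\beta = 1$, and a model with poor personalization $\egises(M_{\boldsymbol{\theta},h}) = 1$ (i.e.\ minimal responsiveness) but a genuinely good accuracy $\text{Score}_{Acc}(M_{\boldsymbol{\theta},h}) = 0.4 \gg 0$, the penalty becomes $f_{sig}(1) \approx 0.731 > 0.4$, so $P\text{-}Acc \approx -0.331 < 0$. To stress that the anomaly is generic rather than a lone corner case, I would note that the whole region $\{\,\text{Score}_{Acc} < \alpha\, f_{sig}(\beta\,\egises)\,\}$ witnesses it; in fact, since $f_{sig}(0)=\tfrac12$, even a perfectly personalized model ($\egises = 0$) already incurs a fixed penalty of $\alpha/2$, so any $\text{Score}_{Acc} < \alpha/2$ is driven negative. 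The conclusion ties directly to the theorem's title: a quantity advertised as an accuracy ought to lie in $[0,1]$, yet P-Acc dips below $0$, rendering its lower bound uninterpretable.

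The main obstacle is conceptual, not computational: I must justify that the chosen $\text{Score}_{Acc}$ genuinely deserves the label ``$\gg 0$''. This is settled by appealing to the empirical scale of summarization accuracy metrics, on which a value near $0.4$ is already high-performing, so a reputable accuracy is provably insufficient to keep P-Acc nonnegative once the additive EGISES penalty is active. A minor secondary technicality is the dependence on the exact saturation profile of $f_{sig}$; I would keep the argument robust by relying only on the lower bound $f_{sig}(0)=\tfrac{1}{2}$, which already forces negativity whenever $\text{Score}_{Acc} < \alpha/2$ regardless of $\beta$ and $\egises$.
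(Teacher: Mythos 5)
Your proposal is correct and follows essentially the same route as the paper: both rest on the additive reformulation $P\text{-}Acc = \text{Score}_{Acc} - \alpha \cdot f_{sig}(\beta \cdot \egises)$ and the observation that the sigmoid penalty term can exceed a respectably large accuracy score; your explicit witness ($\alpha=1$, $\beta=1$, $\egises=1$, $\text{Score}_{Acc}=0.4$) is the forward, constructive version of the paper's backward existence argument and matches the numerical illustration the paper gives immediately after its proof ($\alpha=0.5$, $\beta=1$, $\egises \ge 0.9$, $\text{Score}_{Acc} \le 0.35$). If anything, your version is tighter, since the paper's proof only inverts the sigmoid and asserts that suitable parameters exist, whereas you exhibit them and additionally note the uniform lower bound $f_{sig}(0)=\tfrac{1}{2}$.
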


\begin{proof}
    \begin{equation*}\small
        \begin{split}
            &  P\text{-}Acc(M_{\boldsymbol{\theta},h}) = \text{Score}_{Acc}(M_{\boldsymbol{\theta},h}) - \alpha \cdot f_{sig}(\beta \cdot \egises(M_{\boldsymbol{\theta},h})), \ \ \ \ {\text{By definition}}\\
            & \text{Let } \text{Score}_{Acc}(M_{\boldsymbol{\theta},h}) = k \ \ \text{for sufficiently large $k$}\\
            & \text{Let us assume that } P\text{-}Acc(M_{\boldsymbol{\theta},h}) < 0 \\
            & \therefore  \text{Score}_{Acc}(M_{\boldsymbol{\theta},h}) - (\alpha \cdot f_{sig}(\beta \cdot \egises(M_{\boldsymbol{\theta},h}))) < 0\\
            & \implies f_{sig}(\beta \cdot \egises(M_{\boldsymbol{\theta},h})) = \delta_{+} + \frac{\text{Score}_{Acc}(M_{\boldsymbol{\theta},h})}{\alpha} \ \ \ \text{where $\delta_+$ is a positive quantity}\\
            & \implies \egises(M_{\boldsymbol{\theta},h}) = \frac{1}{\beta}\left(\ln\left(\frac{1-(\delta_{+} + \frac{\text{Score}_{Acc}(M_{\boldsymbol{\theta},h})}{\alpha})}{\delta_{+} + \frac{\text{Score}_{Acc}(M_{\boldsymbol{\theta},h})}{\alpha}}\right)\right) \\
            & \therefore \exists \alpha,\beta,\delta_{+}, s.t. \ \text{Score}_{Acc}(M_{\boldsymbol{\theta},h}) \gg 0
        \end{split}
    \end{equation*}
\end{proof}
The proof implies that P-Acc can take a negative value even for a reasonably high accuracy model. As an example, for $\alpha=0.5, \beta=1 \ (\text{i.e., full importance to lack of personalization}), \egises(M_{\boldsymbol{\theta},h}) \ge 0.9$ if $\text{Score}_{Acc}(M_{\boldsymbol{\theta},h}) \leq 0.35$ then P-Acc $< 0$.  

\begin{theorem}[\textbf{Limitation 3}: Anomalous Behavior]\label{th:P-Acc anomalous behavior}
    There exists anomalous condition when $\text{P-Acc}(M_{\boldsymbol{\theta}_{ik},h})<\text{P-Acc}(M_{\boldsymbol{\theta}_{il},h})$ even when $\egises(M_{\boldsymbol{\theta}_{ik},h})<\egises(M_{\boldsymbol{\theta}_{il},h})$ (NT: lower is better).   
\end{theorem}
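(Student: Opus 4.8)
The plan is to reuse the algebraic strategy of the two preceding limitation proofs, working again from the expanded form $P\text{-}Acc(M_{\boldsymbol{\theta},h}) = \text{Score}_{Acc}(M_{\boldsymbol{\theta},h}) - \alpha \cdot f_{sig}(\beta \cdot \egises(M_{\boldsymbol{\theta},h}))$ obtained by definition. I would fix two models in a collection, the $k$-th and $l$-th, and impose the personalization ordering $\egises(M_{\boldsymbol{\theta}_{ik},h}) < \egises(M_{\boldsymbol{\theta}_{il},h})$ (so the $k$-th model is the better personalized of the two, since lower is better). Exploiting the strict monotonicity of the sigmoid together with $\beta > 0$, this ordering yields $f_{sig}(\beta \cdot \egises(M_{\boldsymbol{\theta}_{ik},h})) < f_{sig}(\beta \cdot \egises(M_{\boldsymbol{\theta}_{il},h}))$, i.e. the penalty subtracted from the $k$-th model's accuracy is strictly smaller than the one subtracted from the $l$-th model's. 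Writing out the target anomaly $P\text{-}Acc(M_{\boldsymbol{\theta}_{ik},h}) < P\text{-}Acc(M_{\boldsymbol{\theta}_{il},h})$ and rearranging, the whole claim collapses to the single scalar condition $\text{Score}_{Acc}(M_{\boldsymbol{\theta}_{ik},h}) - \text{Score}_{Acc}(M_{\boldsymbol{\theta}_{il},h}) < \alpha \cdot [f_{sig}(\beta \cdot \egises(M_{\boldsymbol{\theta}_{ik},h})) - f_{sig}(\beta \cdot \egises(M_{\boldsymbol{\theta}_{il},h}))]$, whose right-hand side is a fixed non-positive number (strictly negative once $\alpha > 0$).

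The crux is then to show that the accuracy gap on the left can be driven below this negative threshold, i.e. that we may take $\text{Score}_{Acc}(M_{\boldsymbol{\theta}_{ik},h})$ sufficiently smaller than $\text{Score}_{Acc}(M_{\boldsymbol{\theta}_{il},h})$ while leaving the assumed EGISES ordering untouched. This is exactly the independence certified by Theorem 1: the accuracy $f^{-1}(\sigma(s_u,u))$ can be varied without disturbing \degress\ (and hence EGISES), so the two inputs to $P\text{-}Acc$ are separately tunable. Invoking this, I would choose an accuracy deficit for the $k$-th model that exceeds the bounded positive quantity $\alpha \cdot [f_{sig}(\beta \cdot \egises(M_{\boldsymbol{\theta}_{il},h})) - f_{sig}(\beta \cdot \egises(M_{\boldsymbol{\theta}_{ik},h}))] \in (0,\alpha)$, after which the scalar inequality, and therefore the anomaly, follows immediately. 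As in the proof of Limitation 2, I would close with a concrete numeric witness (fixing small $\egises(M_{\boldsymbol{\theta}_{ik},h})$, a larger $\egises(M_{\boldsymbol{\theta}_{il},h})$, and an accuracy gap overwhelming the bounded penalty term) to make the existence claim fully explicit.

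The main obstacle I anticipate is not the algebra but justifying the construction: one must ensure the chosen scores are simultaneously realizable by genuine models with the prescribed EGISES ordering, rather than arbitrary plugged-in numbers. Theorem 1 resolves this by guaranteeing that the accuracy coordinate and the responsiveness coordinate are free to be set independently, so a model that is better on EGISES can nonetheless be made arbitrarily worse on accuracy. The remainder is routine bookkeeping to keep $\alpha \in [0,1]$, $\beta \in (0,1]$, and both $\text{Score}_{Acc}$ values within their admissible ranges, which confirms that $P\text{-}Acc$ can penalize the more personalized model, precisely the anomalous behavior asserted.
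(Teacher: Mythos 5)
Your proposal is correct, and its algebraic core — rearranging the target anomaly $P\text{-}Acc(M_{\boldsymbol{\theta}_{ik},h})<P\text{-}Acc(M_{\boldsymbol{\theta}_{il},h})$ into the single scalar condition $\text{Score}_{Acc}(M_{\boldsymbol{\theta}_{ik},h})-\text{Score}_{Acc}(M_{\boldsymbol{\theta}_{il},h})<\alpha\cdot\bigl(f_{sig}(\beta\cdot\egises(M_{\boldsymbol{\theta}_{ik},h}))-f_{sig}(\beta\cdot\egises(M_{\boldsymbol{\theta}_{il},h}))\bigr)$ — is exactly the inequality the paper arrives at. Where you diverge is in how existence is certified. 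The paper enters through Limitation~2: it takes a pair for which both $P\text{-}Acc$ values are already negative, derives the same inequality, declares it ``a possible condition,'' and then exhibits the boundary witness $\text{Score}_{Acc}=0$ for both models with $\egises=0$ versus $\egises=1$, giving $P\text{-}Acc=-1$ against $-0.37$. You instead argue in the forward direction: fix the EGISES ordering, observe via monotonicity of the sigmoid that the right-hand side is a strictly negative constant of magnitude at most $\alpha\cdot(f_{sig}(\beta)-f_{sig}(0))<\alpha$, and then invoke Theorem~1 (the rotation argument showing accuracy can be varied without perturbing \degress) to justify that the accuracy deficit of the better-personalized model can be made large enough to cross that threshold while both scores stay in $[0,1]$. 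This buys you two things the paper's version lacks: the anomaly is established without requiring either $P\text{-}Acc$ to be negative (so it is not parasitic on Limitation~2), and the ``possible condition'' step is replaced by an explicit realizability argument grounded in an earlier theorem rather than an assertion. The paper's route, in exchange, delivers a memorable extreme numeric instance (best-case EGISES scored worse than worst-case EGISES) that makes the severity of the anomaly vivid; you would do well to keep your planned concrete witness for the same reason.
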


\begin{proof}
    \begin{equation*}\small
        \begin{split}
            & \exists (k,l) \ \text{pair}: P\text{-}Acc(M_{\boldsymbol{\theta}_{ik},h}) <  P\text{-}Acc(M_{\boldsymbol{\theta}_{il},h}) < 0 \ \ \ \ {\text{From theorem } \ref{th:P-Acc < 0}}\\
            & \therefore \text{Score}_{Acc}(M_{\boldsymbol{\theta}_{ik},h}) - \text{Score}_{Acc}(M_{\boldsymbol{\theta}_{il},h}) < \alpha \cdot (f_{sig}(\beta \cdot \egises(M_{\boldsymbol{\theta}_{ik},h}) - f_{sig}(\beta \cdot \egises(M_{\boldsymbol{\theta}_{il},h}))\\
            & \implies f_{sig}(\beta \cdot \egises(M_{\boldsymbol{\theta}_{ik},h}) - f_{sig}(\beta \cdot \egises(M_{\boldsymbol{\theta}_{il},h}) > \frac{\text{Score}_{Acc}(M_{\boldsymbol{\theta}_{ik},h}) - \text{Score}_{Acc}(M_{\boldsymbol{\theta}_{il},h})}{\alpha}\\
            & \implies \egises(M_{\boldsymbol{\theta}_{ik},h}) - \egises(M_{\boldsymbol{\theta}_{il},h}) > \frac{\text{Score}_{Acc}(M_{\boldsymbol{\theta}_{ik},h}) - \text{Score}_{Acc}(M_{\boldsymbol{\theta}_{il},h})}{\alpha}\\
            & \Delta(\egises(M_{\boldsymbol{\theta}_{ik},h}),\egises(M_{\boldsymbol{\theta}_{il},h})) > \frac{1}{\alpha}\cdot \Delta(\text{Score}_{Acc}(M_{\boldsymbol{\theta}_{ik},h}), \text{Score}_{Acc}(M_{\boldsymbol{\theta}_{il},h})) -\text{which is a possible condition.}
        \end{split}
    \end{equation*}
\end{proof}
A bounding case of the proof condition is when for $\text{Score}_{Acc}(M_{\boldsymbol{\theta}_{ik},h}) = \text{Score}_{Acc}(M_{\boldsymbol{\theta}_{il},h}) = 0, \  \egises(M_{\boldsymbol{\theta}_{ik},h})= 0 \ \textbf{(best case)} \implies P\text{-}Acc(M_{\boldsymbol{\theta}_{ik},h}) = -1 \ (\textit{\underline{worst case}}), \text{and } \egises(M_{\boldsymbol{\theta}_{il},h}) = 1 \ \textbf{(worst case)}\implies P\text{-}Acc(M_{\boldsymbol{\theta}_{il},h})= -0.37 \ (\textit{\underline{better case}})$. Hence, P-Acc is not suitable for models that have relatively low accuracy scores. The fact that P-Acc is useful to regulate accuracy (and not measure personalization) is evident from Table \ref{tab:p_acc_scores} in Appendix \ref{app:egises & p-acc} where we observe that, unlike \perseval-scores as outlined in section \ref{sec:reliability}, P-Acc scores are heavily dominated by the evaluated PENS models' accuracy scores and therefore, do not differ significantly even though they lack responsiveness (i.e., low EGISES scores; see Table \ref{tab:eg_acc_scores} in Appendix \ref{app:egises & p-acc}). As a remedy, we propose \perseval \ (\textbf{Per}sonalized \textbf{S}ummarizer \textbf{Eval}uator) in the next section.

\section{\perseval: Measure for Personalization}\label{sec:perseval}
The design objective of \perseval \ is two-fold: (i) to ensure that a model is penalized for poor accuracy performance, but at the same time, (ii) to ensure that the evaluation of responsiveness (i.e., \degress) is not obfuscated by high accuracy (since high accuracy does not entail high responsiveness as proved in \citet{egises}). In other words, the underlying maxim behind the design of \perseval \space is -- \textit{accuracy is not a reward, but lack of it is surely a penalty}! We term this penalty as \textit{\textbf{E}ffective \textbf{D}EGRESS \textbf{P}enalty Factor} (\edp). As per the design maxim, if accuracy is 100\%, then there will be no \edp \ applied, and the \perseval\ score will be the same as the \degress\ score. In the subsequent sections, we develop the motivation and formulation of \edp. 

\subsection{Accuracy-drop Penalty (ADP)}\label{sec:adp}
In this section, we introduce the first component of \edp\ - \textit{\textbf{A}ccuracy-\textbf{d}rop \textbf{P}enalty} (\adp). \adp\ is a document-level penalty due to a drop in accuracy for the best-case scenario where a model-generated summary of document $d_i$ ($s_{u_{ij}}$) is closest to the corresponding reader's expected summary $u_{ij}$. In this case, we denoted $s{u_{ij}}$ as $s_{u_{i*}}$. We define \adp\ as follows:

\begin{definition}[Accuracy-drop Penalty]\label{def:adp}
Given document $d_i$ and user-generated summaries $\textbf{U}_{d_i}$, the document-level \adp \space of a model $M_{\boldsymbol{\theta},h}$, denoted by \adp$(s_{u_{i\textbf{*}}}|(d_{i}, u_{i\textbf{*}}))$, is the relative deviation of the best performance ($\sigma^*(s_{u_{i\bullet}}, u_{i\bullet})|d_{i}$) of $M_{\boldsymbol{\theta},h}$ $\forall \sigma(s_{u_{i\bullet}}, u_{i\bullet})|d_{i}$ from the best possible performance (i.e., \textbf{0}) w.r.t its proximity to the worst possible performance (i.e., \textbf{1}). 
    
\end{definition}
Document-level \adp \space is formulated as follows:
\begin{equation} \label{eq:adp}
\begin{split}
    &\adp(s_{u_{i\textbf{*}}}|(d_{i}, u_{i\textbf{*}})) = \frac{1}{1 + 10^{\gamma \ge 4}\cdot\exp\left(-10\cdot\frac{\sigma^*(s_{u_{i\bullet}}, u_{i\bullet})|d_{i}- \textbf{0}}{(\textbf{1} - \sigma^*(s_{u_{i\bullet}}, u_{i\bullet})|d_{i}) + \epsilon}\right)}\\
    &\text{where,}\ \sigma^*(s_{u_{i\bullet}}, u_{i\bullet})|d_{i} =  \min\limits_{j=1}^{|U_{d_i}|}\sigma(s_{u_{ij}}, u_{ij})|d_{i}; \ \ \  \epsilon: \text{An infinitesimally small number} \in (0,1)
\end{split}
\end{equation}

Here, ADP is defined as a shifted sigmoid function where $\gamma \geq 4$ helps to bring the minimum penalty to zero, while the factor of 10 in the exponentiation ensures that the maximum penalty reaches 1 when the ratio of the difference in the best case to the worst case is around 1.5 before it starts exploding (i.e., over-penalization). \adp \space ensures that even if the \degress\ score is acceptable, a penalty due to accuracy drop can still be imposed as a part of \edp. \adp, however, fails to address the scenario where the best-case scenario is acceptable (i.e., accuracy is fairly high) but is rather an outlier case -- i.e., for most of the other model-generated summary versions, there is a considerable accuracy drop. To address this issue, we introduce a second penalty component within \edp\ called \textit{\textbf{A}ccuracy-in\textbf{c}onsistency \textbf{P}enalty} (\acp).

\subsection{Accuracy-inconsistency Penalty (ACP)} \label{sec:acp}
\acp \space accounts for outlier conditions of the best performance, as explained previously. It evaluates how a model performs w.r.t accuracy for a specific generated summary compared to its average performance. More formally, it is defined as follows:  
\begin{definition} \label{def:acp}
\textbf{Accuracy-inconsistency Penalty.} Given document $d_i$ and user-generated summaries $U_{d_i}$, the summary-level \acp \ of a model $M_{\boldsymbol{\theta},h}$, denoted by $(s_{u_{ij}}|(d_{i}, u_{ij}))$, is defined as the relative deviation of the summary performance $\sigma(s_{u_{ij}}, u_{ij})|d_{i}$ of $M_{\boldsymbol{\theta},h}$ from its best performance $\sigma^*(s_{u_{i\textbf{*}}}, u_{i\textbf{*}})|d_{i}$ as compared to the deviation of its average performance $\overline{\sigma}(s_{u_{i\bullet}}, u_{i\bullet})|d_{i}$ from $\sigma^*(s_{u_{i\textbf{*}}}, u_{i\textbf{*}})|d_{i}$. 
\end{definition}

It is to be noted that, unlike \adp, \acp \space is a summary-level measure. This penalty evaluates if the model consistently performs w.r.t accuracy and therefore, conversely, does not inject any additional penalty to \degress \space when a model is consistent. The summary-level \acp \space is formulated as:    
\begin{equation}\label{eq:acp}
\begin{split}
    &\acp(s_{u_{ij}}|(d_{i}, u_{ij})) = \frac{1}{1 + 10^{\gamma \ge 4}\cdot\exp\left(-10\cdot\frac{\sigma(s_{u_{ij}}, u_{ij})|d_{i} - \sigma^*(s_{u_{i\bullet}}, u_{i\bullet})|d_{i}}{(\overline{\sigma}(s_{u_{i\bullet}}, u_{i\bullet})|d_{i} - \sigma^*(s_{u_{i\bullet}}, u_{i\bullet})|d_{i}) + \epsilon}\right)}\\
    &\text{where, }\overline{\sigma}(s_{u_{i\bullet}}, u_{i\bullet})|d_{i} =\frac{1}{|U_{d_i}|} \sum\limits_{j=1}^{|U_{d_i}|}\sigma(s_{u_{ij}}, u_{ij})|d_{i}
\end{split}
\end{equation}

\subsection{\perseval: Formulation}\label{sec:persevalformulation}
We now lay the design of the \perseval\ 
framework as an extension to \degress \space (i.e., $1-\egises$). A multiplicative injection of $\edp \in (0,1]$ should be such that the best accuracy (i.e., $\adp = 0$) with no inconsistency (i.e., $\acp = 0$) would lead to an \edp \space value of 1, and thereby, \degress \space remains unobfuscated (which is the desired objective). The following formulation of \perseval \space guarantees these properties: 
\begin{equation}\label{eq:perseval}
\begin{split}
    &\perseval(s_{u_{ij}}|(d_{i}, u_{ij})) = \degress(s_{u_{ij}}|(d_{i}, u_{ij})) \times \edp(s_{u_{ij}}|(d_{i}, u_{ij}))\\
    \text{where, }&\edp(s_{u_{ij}}|(d_{i}, u_{ij})) = 1 - \frac{1}{{1 + 10^{\alpha \ge 3}\cdot\exp\left(-(10^{\beta \ge 1}\cdot\dgp(s_{u_{ij}}|(d_{i}, u_{ij})))\right)}}\\
    \text{and, } & \dgp(s_{u_{ij}}|(d_{i}, u_{ij})) = \adp(s_{u_{i\textbf{*}}}|(d_{i}, u_{i\textbf{*}})) + \acp(s_{u_{ij}}|(d_{i}, u_{ij}))
\end{split}
\end{equation}

The system-level \perseval\ score is as follows:
\begin{equation} \label{eq:syslevelperseval}
\perseval(M_{\boldsymbol{\theta},h}) = \frac{\sum\limits_{i=1}^{|\textbf{D}|}\frac{\sum\limits_{j=1}^{|U_{d_i}|} \perseval(s_{u_{ij}}|(d_{i}, u_{ij}))}{|U_{d_i}|}}{|\textbf{D}|}
\end{equation}
We design \edp \space as an inverse sigmoid function of the overall \textit{\textbf{D}E\textbf{G}RESS \textbf{P}enalty} (\dgp), which sums up \adp \space and \acp. $\alpha$ and $\beta$ are hyper-parameters that help to control the shape of \edp. $\alpha \geq 3$ ensures that \edp \ is 1 (i.e., $1 - \textbf{0}$) when there is no penalty, thereby making \perseval\  equivalent to DEGRESS. $\beta \geq 1$ ensures that the function does not drop sharply, thereby over-penalizing (and hence, dampening) an otherwise fairly good DEGRESS score (i.e., responsiveness). Since $\beta$ may significantly affect the overall human-judgment correlation, we did an ablation study (Fig \ref{fig:ablation-studies}; Section \ref{sec:reliability}) to find the optimal value (which was observed to be 1.7). The system-level $\perseval \in [0,1]$ and is bounded by the system-level \degress\ score.

\section{Benchmarking of SOTA Summarization Models w.r.t \perseval} \label{sec:case-study}
\subsection{Model Benchmarking Dataset}\label{sec:datasets}
\paragraph{Microsoft PENS Dataset (News Domain).} \label{sec:PENS dataset}
Our study, as in \citep{egises}, assesses models using test data from the PENS dataset provided by Microsoft Research \citep{pens}\footnote{We comply with the Microsoft Research License Terms.}. This dataset pairs news headlines with articles, serving as concise summaries. The test set creation involved two phases: initially, 103 English speakers selected 50 articles of their interest from a pool of 1000, sorted based on exposure time. In the second phase, participants generated preferred headlines (gold references) for 200 articles without knowledge of the originals. The assignment ensured an average of four gold-reference summaries per article. The PENS dataset was chosen because it is the only one that contains the users' reading history, i.e., the temporal sequence of interactions (clicking, reading, and user-generated gold summaries), making it ideal for evaluating five SOTA personalized summarization models that require user reading history as an input.

\paragraph{OpenAI CNN/Daily Mail Dataset (News Domain).} \label{sec:OpenAI CNN/DM dataset}

To understand the applicability of \perseval\ on mainstream gold-standard news datasets, we design an \textbf{\underline{indirect} evaluation methodology} with the OpenAI CNN/DM dataset (validation and test) released by \citet{openai}. 
The test dataset contains 639 articles processed by 19 policies, resulting in 5515 summaries rated by the same annotators on accuracy, coherence, coverage, and overall quality using a 7-point Likert scale. However, unlike the PENS dataset, this dataset lacks the \textbf{temporal span of human evaluator interactions forming the reading history} required by the PENS summarization models. Additionally, we do not have gold-reference summaries (but only ratings).\footnote{It is to be noted that human-written gold reference summaries were not always selected or rated as highest by the human annotators, \textbf{\underline{clearly indicating the subjectivity of saliency}}.} Hence, to conduct our experiments, we evaluated the personalization of 4 (out of 19) OpenAI policies, focusing on RLHF-tuned policies, as base-model and SFT-based policies are non-personalized. Since the \textbf{policy-generated summaries are not specific to any human annotator} as the policies are \textbf{not trained on any reading history}, we do not observe multiple user-specific summaries from the same policy - a necessary condition for evaluating personalization. To address this, we concatenate all summaries generated by different policies (say, $k$ such summaries) that were rated above 6 (out of $r_{max} = 7$) by a specific annotator (say, Alice) into one \textbf{\textit{combined gold-reference summary}} as per Alice ($u_{\text{Alice}}$). After this step, the content of each of all the policy-generated summaries ($s$) that received some rating ($[1,r_{max}]$) from Alice ($r_{\text{Alice}}(s)\equiv r_{(s,\text{Alice})}$) is then concatenated $(k + (r_{max} - r_{(s,\text{Alice})}))$-times to create a policy-generated \textit{\textbf{surrogate personalized summary}} ($s_{\text{Alice}}$) for Alice. Ideally, this surrogate should match Alice's combined gold-reference $u_{\text{Alice}}$. If $r_{(s,\text{Alice})}$ is much lower than $r_{max}$, then the surrogate should get penalized. The injection of redundant, unmatchable content helps to achieve this. We repeat this process for all annotators and generate the \perseval\ scores of each policy across 27 (out of 639) articles. Since $k$ and $r_{(s,\bullet)}$ can differ between annotators, the penalty (and thereby length) of the surrogates will subjectively differ as well.

\paragraph{OpenAI TL;DR (Reddit) Dataset (Open Domain).}\label{sec:OpenAI Reddit dataset}
To understand the broader applicability of \perseval, we also appropriated the OpenAI TL;DR dataset \cite{openai}. This dataset is a collection of 123,169 Reddit posts adopted from the dataset by \citet{openai-reddit} covering 29 subreddits (i.e., domains) of varied types, their corresponding OpenAI policy-generated summaries, and the human-written gold-reference summaries. 
A subset of the validation dataset comprises 1038 posts that were fed into 13 policies to generate 7713 summaries. Like the OpenAI CNN/DM dataset, 32 human annotators rated the summaries in the same format as in the previous OpenAI CNN/DM dataset. As a part of the appropriation, we carry the same method as that with CNN/DM leading to the evaluation of 4 RLHF-tuned policies (out of 13) and 57 (out of 1038) annotated articles. However, we would like to emphasize that this indirect benchmarking is \textbf{neither an equivalent evaluation methodology to the PENS dataset} based one, \textbf{nor is a simulation} of the required setup. There is so far no other dataset comparable to PENS for evaluating personalization, and hence, \textbf{all benchmarking results} in the outlined method should \textbf{be seen as \underline{baseline} within the scope of the method}.

\subsection{SOTA Summarization Models Evaluated}\label{sec:10models}
\textbf{Personalized Summarization Models (Microsoft PENS dataset).} We study ten SOTA summarization models for comparative benchmarking as in \citep{egises}. Five of them are specifically trained personalized models and follow the PENS framework \citep{pens}: PENS-NRMS Injection-Type-1 (T1)/Type-2 (T2), PENS-NAML T1, PENS-EBNR T1 and PENS-EBNR T2. The others are generic SOTA models - BRIO \citep{brio}, SimCLS \citep{simcls}, BigBird-Pegasus \citep{bigbird}, ProphetNet \citep{prophetnet}, and T5-base \citep{t5-4}. The selections are based on their consistent top-5 ranking over the preceding four years on the CNN/Daily Mail news dataset \cite{dataset-cnn/dm}. Appendix \ref{app:modelstudied} contains model descriptions.

\textbf{Non-personalized Summarization Models (Microsoft PENS dataset).} For the non-personalized models, we follow the evaluation setup used by \citet{egises} by augmenting the documents with the reference summaries of each reader as document titles (i.e., cues). This results in subjective document versions corresponding to each reader. The models ideally should pick up the cues and generate them back as an output, thereby inducing an ``apparent'' sense of personalization. This injection process provides robust baselines for comparative evaluation.

\textbf{RLHF-tuned OpenAI Summarization Policies (OpenAI CNN/DM \& TL;DR (Reddit) datasets).} We analyze five OpenAI policies (parameter size 1.3B and 6.7B) that have been RLHF-tuned using TL;DR (Reddit) training datasets. The base pre-trained model architecture is similar to that of GPT-3. The base model is then further fine-tuned in a supervised setup on a filtered set of the Reddit posts to generate SFT-models (denoted "\textit{sup}"). The Reward Model (denoted "\textit{rm}") is a linear head that predicts the rating. The Reinforcement Learning of the policy model is done via the PPO algorithm \cite{RLHF-PPO}. 

\begin{table*}[t]
\centering
\scalebox{0.75}
{
    \begin{tabular}{lccccccc}
        \hline
        \multicolumn{8}{c}{\textbf{Basline-I: Microsoft PENS Test Dataset with Cue Injection}}\\
        \hline
        \textbf{Models} & \textbf{PSE-RG-L} & \textbf{PSE-RG-SU4} & \textbf{PSE-METEOR} & \textbf{PSE-BLEU} & \textbf{PSE-JSD} & \textbf{PSE-BScore} & \textbf{PSE-InfoLM-$\alpha\beta$} \\
        \hline
        \textbf{BigBird-Pegasus} & \textbf{0.205} & \textbf{0.143} & \textbf{0.168} & \textbf{0.156} & \textbf{0.253} & \textbf{0.320} & \textbf{0.195}\\
        \textbf{SimCLS} & 0.079 & 0.032 & 0.016 & 0.014 & 0.157 & 0.286 & 0.18\\
        \textbf{ProphetNet} & 0.054 & 0.027 & 0.018 & 0.016 & 0.097 & 0.228 & 0.135 \\
        \textbf{T5 (Base)} & 0.035 & 0.022 & 0.011 & 0.016 & 0.073 & 0.207 & 0.124\\
        \textbf{BRIO} & 0.030 & 0.021 & 0.008 & 0.013 & 0.107 & 0.242 & 0.154\\
        \hline
        \multicolumn{8}{c}{\textbf{Personalization Benchmarking: Microsoft PENS Test Dataset with Human Reading History}}\\
        \hline
        \textbf{PENS-NAML T1} & 0.018 & 0.013 & 0.019 & 0.010 & 0.025 & 0.074 & 0.051\\
        \textbf{PENS-NRMS T1} & 0.016 & 0.011 & 0.016 & 0.008 & 0.022 & 0.062 & 0.045\\
        \textbf{PENS-EBNR T1} & 0.009 & 0.008 & 0.010 & 0.005 & 0.015 & 0.037 & 0.029\\
        \textbf{PENS-EBNR T2} & 0.002 & 0.005 & 0.003 & 0.002 & 0.006 & 0.008 & 0.013\\
        \textbf{PENS-NRMS T2} & 0.002 & 0.004 & 0.003 & 0.002 & 0.006 & 0.007 & 0.013\\
    \end{tabular}
}
\scalebox{0.71}
{
    \begin{tabular}{lccccccc}
        \hline
        \multicolumn{8}{c}{\textbf{Basline-II: OpenAI CNN/DM Test and TL;DR (Reddit) Validation Datasets with Human-Feedback (i.e., Rating)}}\\
        \hline
        \textbf{Models (i.e., Policies)} & \textbf{PSE-RG-L} & \textbf{PSE-RG-SU4} & \textbf{PSE-METEOR} & \textbf{PSE-BLEU} & \textbf{PSE-JSD} & \textbf{PSE-BScore} & \textbf{PSE-InfoLM-$\alpha\beta$} \\
        \hline
        \textbf{sup4/rm4 (6.7B)} & \textbf{0.943}/\textbf{0.719} & \textbf{0.941}/\textbf{0.730} & \textbf{0.906}/0.525 & \textbf{0.887}/0.395 & \textbf{0.943}/0.732 & \textbf{0.998}/0.719 & \textbf{0.999}/\textbf{0.836} \\
        \textbf{sup4/rm4-t.7 (6.7B)} & NA/0.711 & NA/0.648 & NA/\textbf{0.593} & NA/\textbf{0.465} & NA/\textbf{0.737} & NA/0.790 & NA/0.818 \\
        \textbf{sup4/rm4 (1.3B)} & 0.617/0.523 & 0.617/0.502 & 0.413/0.357 & 0.612/0.163 &  0.617/0.566 & 0.994/\textbf{0.795} & \textbf{0.999}/0.605 \\
        \textbf{sup4/rm4-kl14 (6.7B)} & 0.783/NA & 0.783/NA & 0.562/NA & 0.592/NA & 0.783/NA & 0.884/NA & 0.783/NA\\
        \textbf{sup4/rm4-t.7 (1.3B)} & 0.505/0.481 & 0.5/0.414 & 0.267/0.417 & 0.266/0.323 & 0.502/0.523 & 0.92/0.776 & 0.646/0.602 \\
        \hline
    \end{tabular}
}
\caption{SOTA model-benchmarking on Microsoft PENS and OpenAI CNN/DM \& TL;DR (Reddit) datasets w.r.t \perseval\ (PSE) (\textbf{NA}: not applied on the dataset); \perseval\ hyper-parameters: $\alpha$ = 3, {$\beta$ = 1.7} (optimal $\beta$; see \ref{fig:ablation-studies}), and $\gamma$ = 4. \textbf{Observation 1}: \textit{Disagreement between keyword-based (RG-L, RG-SU4, METEOR, BLEU) and non-keyword-based PSE variants (JSD, BScore, InfoLM) in terms of non-personalized baseline leaderboard from rank-3 onward}; \textbf{Observation 2}: \textit{InfoLM-$\alpha\beta$ has more reliable discriminatory performance with less sharp changes for PENS dataset}; 
\textbf{Observation 3}: \textit{Specialized personalized models lack personalization w.r.t all variants of \perseval}; \textbf{Observation 4}: \textit{RLHF-tuned LLM-based policies exhibit much higher personalization baseline (i.e, baseline-II) compared to non-personalized specialized summarization models (i.e., baseline-I) \underline{within the limited scope of the surrogate-based indirect evaluation}}; \textbf{Observation 5}: \textit{PSE-scores are very high for top-performing OpenAI policy due to most surrogate summaries being highly rated and very similar to the combined gold-references}.}
\label{tab:Personalization Benchmark on PENS}
\vspace{-3mm}
\end{table*}

\subsection{Baseline Distance Metrics and Scores}\label{sec:distancemetrics}
\perseval\ is a generic measurement framework (like EGISES) where the specific metric space $\mathcal{M}$ on which $\sigma$ is defined should be appropriately selected such that we achieve the best human-judgment (HJ) correlation. In this paper, we choose seven summarization accuracy metrics that are defined on standard algebraic spaces and plug them in \perseval\ in isolation as distance metrics (i.e., $\sigma$) : (i) ROUGE (RG)-L, (ii) ROUGE (RG)-SU4, (iii) BLEU-1, (iv) METEOR,\footnote{First four defined as \textit{\underline{similarity}} functions on the string space while BScore is defined as \textit{\underline{similarity}} function (as opposed to distance function) on high dimensional embedding space; see Appendix \ref{app:accuracymeasuresbrief} for details on each of the seven measures.} (v) BertScore (BScore) defined on embedding space, (vi) Jenson-Shannon Distance \citep{jsd} on probability space, and (vii) InfoLM-$\alpha\beta$ ($\alpha$ = 1; $\beta$ = 1) on probability space generated from the embedding space of a masked-LM. RG is chosen because it has a very high HJ (Pearson \& Kendall) correlation ($> 0.7$) in most standard datasets such as CNN/DM and TAC-2008 (for RG-L) as reported in \citep{metaeval-bhandari,rg-l-hj-correlation}, and DUC-2001/2002 (for RG-L/SU-4) \citep{rouge}. For the same reason, the $\alpha\beta$ variant of InfoLM is chosen. Comprehensive benchmark results w.r.t optimal \perseval\ hyperparameters (i.e., $\alpha=3$, $\beta=1.7$, $\gamma=4$; see Figure \ref{fig:ablation-studies} for ablation results) for each of the variants are given in Table \ref{tab:Personalization Benchmark on PENS}. We observe that most non-personalized models, such as BigBird-Pegasus, produce significantly stronger baselines across most \perseval\ variants. However, keyword-based PSE variants (RG-L, RG-SU4, METEOR, BLEU) disagree with non-keyword-based PSE variants (JSD, BScore, InfoLM) in terms of the non-personalized baseline leaderboard from rank-3 onward. One reason for this is that JSD, BScore, and InfoLM being embedding-based can handle out-of-distribution (OOD) situations leading to lesser penalty for keyword-level mismatch that happens in abstraction summarization. We also find that the InfoLM-$\alpha\beta$ variant shows "\textit{smoother discrimination}" (i.e., no sharp jump in the performance) when compared to RG-SU4, BLEU, and JSD. At the same time, the discriminatory performance of InfoLM-$\alpha\beta$ and JSD variants is much better than BScore, METEOR, and RG variants, leading to a more reliable leaderboard. The results on the OpenAI datasets should be seen within the context that: (a) the models are essentially policy \textbf{\underline{variants}} that differs w.r.t size and specific PPO implementation, and (b) the indirect benchmarking method is limited (as outlined in section \ref{sec:10models}).   

\section{Meta-evaluation of \perseval: Experiment Design} \label{sec:experiment design}
In this section, we lay down the experiment design that forms the foundation to establish -- (i) the \textbf{reliability} of \perseval\  w.r.t human-judgment correlation, (ii) the \textbf{stability} of \perseval, (iii) \perseval \space as a rank-measure is \textbf{not entailed by \degress-rank} (i.e., the EGISES paradox is empirically existent), and (iv) \perseval \space can be a \textbf{standalone rank-measure} without the need of any aggregated ranking. 
\subsection{Meta-evaluation of \perseval\ Reliability: Survey-based Direct Method} \label{sec:survey dataset}
\textbf{Meta-evaluation Objective.} As pointed by \citet{egises}, unlike the meta-evaluation of accuracy measures, direct meta-evaluation of personalization is not a feasible task since that would require human evaluators to work as a team and to understand how their subjective assessment of the \perseval\ scores (i.e., to what extent they agree with the scores) corresponds with the differences in the model-generated summaries and their own subjective expected summaries. As an alternative, we propose a survey-based evaluation methodology to simulate this scenario. We argue that the meta-evaluation of \perseval\ should have two objectives: (i) whether human evaluators would judge the responsiveness of models in the same "\textit{ratio-way}" as what \degress\ does, and (ii) whether human evaluators would apply the accuracy drop to the responsiveness in the same "\textit{factor-way}" as what \perseval\ does. In other words, the central objective is to validate to what extent human evaluators \textbf{agree with the design principles of \perseval\ at a cognitive level}. 
\textbf{Participants.} We opened the survey to a selected pool of graduate students demonstrating fair English comprehension. The pool consisted of students from five different backgrounds: (i) computer science, (ii) electrical engineering, (iii) humanities \& social sciences, (iv) mathematics, and (v) physics. The survey was promoted in ongoing courses, student associations, and social media groups. 169 students ($\sim$45\% male, $\sim$55\% female) within the age group of 25-40 completed the survey. No other personal details were asked.

\textbf{Survey Procedure.} Each participant was shown a pair of gold-reference summaries (corresponding to two user profiles) for a specific news article from the PENS dataset. Along with this, five pairs of model-generated summaries were shown, each pair corresponding to five of the ten models studied in this paper (i.e., two participant responses covered all the ten models for a given document). To eliminate response bias, the participants were unaware of which of the six pairs was a gold-reference, and the model names were also not revealed. The same set (shuffled) was shown to two random participants to get an average. Each participant was asked to provide similarity ratings between 1 (low) and 6 (very high) for the summary pairs. A sample snapshot questionnaire is provided in Appendix \ref{app:sample-survey} (figure \ref{fig:sample-survey}). 

\textbf{Modeling Human-judgment of Personalization (\perseval-HJ).} We model a "\textit{human version}" of \degress\  (termed \degress-HJ) using the normalized rating as the divergences (i.e., $\sigma(u_{ij},u_{ik})$ and $\sigma(s_{u_{ij}},s_{u_{ik}})$ for document $\doc_i$ and gold-references $u_{ij}$ and $u_{ik}$). As mentioned above, if \perseval\ needs to be reliable, then the necessary condition is that \degress\ should have a high correlation with \degress-HJ, failing which it can be concluded that human evaluators do not interpret divergences in the ratio-way of \degress. We use the standard Pearson's coefficient ($r$), Spearman's $\rho$, and Kendall's $\tau$ rank coefficients for this. As a sufficient part of the reliability test, we model the "\textit{human version}" of \edp\ using standard accuracy measures (i.e., RG-L, RG-SU4, METEOR, BLEU, and InfoLM-$\alpha\beta$) as surrogates. The motivation behind this is that such measures are known to have high HJ-correlation. The objective was to check whether such a surrogate, if used as a factor (just as in \perseval) with \perseval-HJ, shows a high correlation with \perseval, failing which implies that human evaluators do not cognitively resonate with this factor-styled discounting of \degress.   

\subsection{Meta-evaluation w.r.t Stability}\label{sec:sampling method}
\perseval, being a rank measure, is said to be \textbf{\textit{stable}} if, for any random sample of document (and corresponding gold-references) selected from the evaluation dataset, the rank of the evaluated models does not change. This meta-measure is important because it \textbf{\textit{objectively}} checks if \perseval\ can be relied on any arbitrary personalization evaluation dataset, unlike the \perseval-HJ based reliability evaluation, which is subjective and indirect in nature.  In order to establish stability, we need to define it w.r.t a specific sampling method. 

\textbf{Sampling Method for Stability Meta-evaluation.} 
To understand the stability of \perseval, we create random sample collections $\mathcal{C}^{k}_{(D,S,U)}$,  where $k=\{80\%, 60\%, 40\%, 20\%\}$ of the PENS dataset and $N=|(D,S,U)|$. Each collection has ten random sample sets $n^k_i \subset N$; $i=[1:10]$ (with replacement). We benchmark the models on all 40 sample sets to obtain corresponding leaderboards. We compare that with the rank obtained from the entire dataset. We formalize this notion of stability of a rank measure as follows:

\begin{definition}\label{weakly stable} \label{sec:weak stability}
    \textbf{Weakly Stable Rank Measure.} A rank measure is $\epsilon$-weakly stable if the maximum rank-correlation (w.r.t stat. $\tau$) between the measure-generated model-ranking on each $n^k_i$ and model-ranking on the entire dataset $N$ is $\leq \epsilon$.
\end{definition}

\begin{definition}\label{strongly stable}
    \textbf{Strongly Stable Rank Measure.} A rank measure is $\delta$-strongly stable if (i) it is $\epsilon$-weakly stable, (ii) the bias over $\mathcal{C}_{(D,S,U)}$ w.r.t the mean score of each $\mathcal{C}^{k}_{(D,S,U)}$ for each evaluated model is $\leq \delta_b$, and (iii) the variance over $\mathcal{C}_{(D,S,U)}$ w.r.t the expected variance of the scores of each $\mathcal{C}^{k}_{(D,S,U)}$ for each model is $\leq \delta{var}$; $\delta=max(\delta_b, \delta_{var})$.     
\end{definition}



\begin{figure}[t]
\centerline{\includegraphics[width=0.5\textwidth]{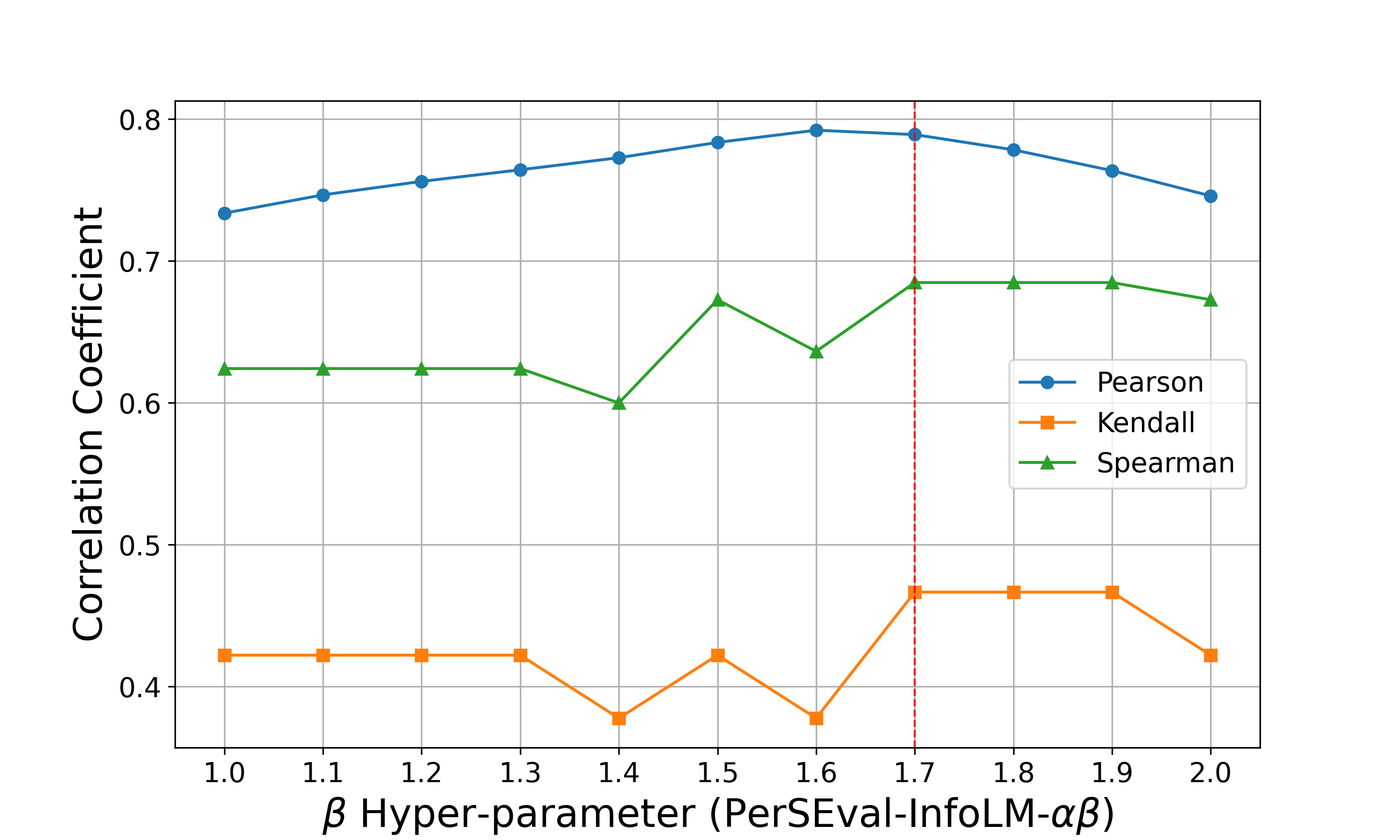}}
\caption{\textbf{PSE-ILM Ablation:} Effect of $\beta$ on HJ-Corr; Optimal performance at $\beta=1.7$ across all three standard correlation measures (Pearson $r$, Spearman $\rho$, Kendall $\tau$).}
\label{fig:ablation-studies}
\end{figure}

\begin{table}[t]
\centering
\scalebox{0.65}
{
    \begin{tabular}{lcccccccc}
        \hline
        \multicolumn{8}{c}{\textbf{Microsoft PENS dataset/OpenAI CNN-DM dataset/OpenAI Tl;DR (Reddit) dataset}}\\
        \hline
        \multicolumn{1}{c}{} & \multicolumn{1}{c}{} & \multicolumn{5}{c}{\textbf{\perseval-HJ$^{\text{RG-L}}$}} & \multicolumn{1}{c}{} & \multicolumn{1}{c}{}\\
        \hline
        \textbf{HJ-Corr.} & \textbf{PSE-RG-L} & \textbf{PSE-RG-SU4} & \textbf{PSE-METEOR} & \textbf{PSE-BLEU} & \textbf{PSE-JSD} & \textbf{PSE-BScore} & \textbf{PSE-InfoLM-$\alpha\beta$} \\
        \hline
        Pearson's $r$ & \textcolor{red}{\textbf{0.23}}/\colorbox{lightred}{0.90/-0.89} & \textcolor{orange}{\textbf{0.34}}/\colorbox{lightred}{0.89/-0.81} & \textcolor{orange}{\textbf{0.49}}/\colorbox{lightred}{0.89/-0.83} & \textcolor{orange}{\textbf{0.43}}/\colorbox{lightred}{0.67/-0.63} & \textcolor{red}{\textbf{-0.05}}/\colorbox{lightred}{0.90/-0.92} & \textcolor{red}{\textbf{-0.41}}/0.02/0.07 & \textcolor{teal}{\textbf{0.79}}/\colorbox{lightred}{0.21/-0.71} \\
        Spearman's $\rho$ & \textcolor{red}{\textbf{-0.25}}/\colorbox{lightred}{0.80/-0.80} & \textcolor{red}{\textbf{-0.27}}/\colorbox{lightred}{0.80/-0.80} & \textcolor{red}{\textbf{0.15}}/\colorbox{lightred}{0.80/-0.80} & \textcolor{red}{\textbf{-0.28}}/\colorbox{lightred}{0.40/-0.80} & \textcolor{red}{\textbf{-0.26}}/\colorbox{lightred}{0.80/-1.00} & \textcolor{red}{\textbf{-0.28}}/0.20/0 & \textcolor{teal}{\textbf{0.69}}/\colorbox{lightred}{0.40/-0.80}\\
        Kendall's $\tau$ & \textcolor{red}{\textbf{-0.20}}/\colorbox{lightred}{0.67/-0.67} & \textcolor{red}{\textbf{-0.24}}/\colorbox{lightred}{0.67/-0.67} & \textcolor{orange}{\textbf{0.16}}/\colorbox{lightred}{0.67/-0.67} & \textcolor{red}{\textbf{-0.24}}/\colorbox{lightred}{0.33/-0.67} & \textcolor{red}{\textbf{-0.20}}/\colorbox{lightred}{0.67/-1.00} & \textcolor{red}{\textbf{-0.24}}/0/0 & \textcolor{cyan}{\textbf{0.47}}/\colorbox{lightred}{0.33/-0.67}\\

        \hline \hline
        \multicolumn{1}{c}{} & \multicolumn{1}{c}{} & \multicolumn{5}{c}{\textbf{\perseval-HJ$^{\text{RG-SU4}}$}} & \multicolumn{1}{c}{} & \multicolumn{1}{c}{}\\
        \hline
        \textbf{HJ-Corr.} & \textbf{PSE-RG-L} & \textbf{PSE-RG-SU4} & \textbf{PSE-METEOR} & \textbf{PSE-BLEU} & \textbf{PSE-JSD} & \textbf{PSE-BScore} & \textbf{PSE-InfoLM-$\alpha\beta$} \\
        \hline
        Pearson's $r$ & \textcolor{teal}{\textbf{0.91}}/\colorbox{lightred}{0.90/-0.89} & \textcolor{teal}{\textbf{0.96}}/\colorbox{lightred}{0.89/-0.81} & \textcolor{teal}{\textbf{0.98}}/\colorbox{lightred}{0.67/-0.82} & \textcolor{teal}{\textbf{0.98}}/\colorbox{lightred}{0.89/-0.62} & \textcolor{teal}{\textbf{0.77}}/\colorbox{lightred}{0.89/-0.92} & \textcolor{orange}{\textbf{0.49}}/0.03/0.07 & \textcolor{teal}{\textbf{0.73}}/\colorbox{lightred}{0.21/-0.70}\\
        Spearman's $\rho$ & \textcolor{red}{\textbf{0.07}}/\colorbox{lightred}{0.80/-0.80} & \textcolor{red}{\textbf{0.05}}/\colorbox{lightred}{0.80/-0.80} & \textcolor{red}{\textbf{-0.10}}/\colorbox{lightred}{0.80/-0.80} & \textcolor{red}{\textbf{-0.02}}/\colorbox{lightred}{0.40/-0.80} & \textcolor{red}{\textbf{0.16}}/\colorbox{lightred}{0.80/-1.00} & \textcolor{red}{\textbf{0.15}}/0.20/0 & \textcolor{teal}{\textbf{0.77}}/\colorbox{lightred}{0.40/-0.80}\\
        Kendall's $\tau$ & \textcolor{orange}{\textbf{0.11}}/\colorbox{lightred}{0.67/-0.67} & \textcolor{orange}{\textbf{0.07}}/\colorbox{lightred}{0.67/-0.67} & \textcolor{red}{\textbf{-0.07}}/\colorbox{lightred}{0.33/-0.67} & \textcolor{red}{\textbf{-0.02}}/\colorbox{lightred}{0.67/-0.67} & \textcolor{orange}{\textbf{0.20}}/\colorbox{lightred}{0.67/-1.00} & \textcolor{orange}{\textbf{0.16}}/0/0 & \textcolor{teal}{\textbf{0.6}}/\colorbox{lightred}{0.33/-0.67}\\
        \hline \hline
        \multicolumn{1}{c}{} & \multicolumn{1}{c}{} & \multicolumn{5}{c}{\textbf{\perseval-HJ$^{\text{METEOR}}$}} & \multicolumn{1}{c}{} & \multicolumn{1}{c}{}\\
        \hline
        \textbf{HJ-Corr.} & \textbf{PSE-RG-L} & \textbf{PSE-RG-SU4} & \textbf{PSE-METEOR} & \textbf{PSE-BLEU} & \textbf{PSE-JSD} & \textbf{PSE-BScore} & \textbf{PSE-InfoLM-$\alpha\beta$} \\
        \hline
        Pearson's $r$ & \textcolor{red}{\textbf{0.06}}/\colorbox{lightred}{0.90/-0.90} & \textcolor{red}{\textbf{0.18}}/\colorbox{lightred}{0.90/-0.81} & \textcolor{orange}{\textbf{0.36}}/\colorbox{lightred}{0.89/-0.86} & \textcolor{red}{\textbf{0.29}}/\colorbox{lightred}{0.67/-0.67} & \textcolor{red}{\textbf{-0.23}}/\colorbox{lightred}{0.90/-0.92} & \textcolor{red}{\textbf{-0.56}}/0.02/0.08 & \textcolor{teal}{\textbf{0.77}}/\colorbox{lightred}{0.21/-0.72}\\
        Spearman's $\rho$ & \textcolor{red}{\textbf{-0.28}}/\colorbox{lightred}{0.80/-0.80} & \textcolor{red}{\textbf{-0.33}}/\colorbox{lightred}{0.80/-0.80} & \textcolor{red}{\textbf{0.12}}/\colorbox{lightred}{0.80/-0.80} & \textcolor{red}{\textbf{-0.27}}/\colorbox{lightred}{0.40/-0.80} & \textcolor{red}{\textbf{-0.32}}/\colorbox{lightred}{0.80/-1.00} & \textcolor{red}{\textbf{-0.37}}/0.20/0 & \textcolor{teal}{\textbf{0.71}}/\colorbox{lightred}{0.40/-0.80} \\
        Kendall's $\tau$ & \textcolor{red}{\textbf{-0.29}}/\colorbox{lightred}{0.67/-0.67} & \textcolor{red}{\textbf{-0.33}}/\colorbox{lightred}{0.67/-0.67} & \textcolor{orange}{\textbf{0.07}}/\colorbox{lightred}{0.67/-0.67} & \textcolor{red}{\textbf{-0.24}}/\colorbox{lightred}{0.33/-0.67} & \textcolor{red}{\textbf{-0.38}}/\colorbox{lightred}{0.67/-1.00} & \textcolor{red}{\textbf{-0.42}}/0/0 & \textcolor{teal}{\textbf{0.51}}/\colorbox{lightred}{0.33/-0.67}\\
        
        \hline \hline
        \multicolumn{1}{c}{} & \multicolumn{1}{c}{} & \multicolumn{5}{c}{\textbf{\perseval-HJ$^{\text{BLEU}}$}} & \multicolumn{1}{c}{} & \multicolumn{1}{c}{}\\
        \hline
        \textbf{HJ-Corr.} & \textbf{PSE-RG-L} & \textbf{PSE-RG-SU4} & \textbf{PSE-METEOR} & \textbf{PSE-BLEU} & \textbf{PSE-JSD} & \textbf{PSE-BScore} & \textbf{PSE-InfoLM-$\alpha\beta$} \\
        \hline
        Pearson's $r$ & \textcolor{teal}{\textbf{0.84}}/\colorbox{lightred}{0.90/-0.89} & \textcolor{teal}{\textbf{0.91}}/\colorbox{lightred}{0.89/-0.81} & \textcolor{teal}{\textbf{0.95}}/\colorbox{lightred}{0.89/-0.82} & \textcolor{teal}{\textbf{0.95}}/\colorbox{lightred}{0.67/-0.62} & \textcolor{cyan}{\textbf{0.67}}/\colorbox{lightred}{0.89/-0.92} & \textcolor{orange}{\textbf{0.36}}/0.03/0.07 & \textcolor{cyan}{\textbf{0.63}}/\colorbox{lightred}{0.21/-0.71}\\
        Spearman's $\rho$ & \textcolor{red}{\textbf{-0.13}}/\colorbox{lightred}{0.80/-0.80} & \textcolor{red}{\textbf{-0.12}}/\colorbox{lightred}{0.80/-0.80} & \textcolor{red}{\textbf{-0.08}}/\colorbox{lightred}{0.80/-0.80} & \textcolor{red}{\textbf{-0.19}}/\colorbox{lightred}{0.40/-0.80} & \textcolor{red}{\textbf{-0.03}}/\colorbox{lightred}{0.80/-1.00} & \textcolor{red}{\textbf{-0.02}}/0.20/0 & \textcolor{teal}{\textbf{0.65}}/\colorbox{lightred}{0.40/-0.80}\\
        Kendall's $\tau$ & \textcolor{red}{\textbf{-0.16}}/\colorbox{lightred}{0.67/-0.67} & \textcolor{red}{\textbf{-0.11}}/\colorbox{lightred}{0.67/-0.67} & \textcolor{red}{\textbf{-0.07}}/\colorbox{lightred}{0.67/-0.67} & \textcolor{red}{\textbf{-0.20}}/\colorbox{lightred}{0.33/-0.67} & \textcolor{red}{\textbf{-0.07}}/\colorbox{lightred}{0.67/-1.00} & \textcolor{red}{\textbf{-0.02}}/0/0 & \textcolor{teal}{\textbf{0.51}}/\colorbox{lightred}{0.33/-0.67}\\

        \hline \hline
        \multicolumn{1}{c}{} & \multicolumn{1}{c}{} & \multicolumn{5}{c}{\textbf{\perseval-HJ$^{\text{JSD}}$}} & \multicolumn{1}{c}{} & \multicolumn{1}{c}{}\\
        \hline
        \textbf{HJ-Corr.} & \textbf{PSE-RG-L} & \textbf{PSE-RG-SU4} & \textbf{PSE-METEOR} & \textbf{PSE-BLEU} & \textbf{PSE-JSD} & \textbf{PSE-BScore} & \textbf{PSE-InfoLM-$\alpha\beta$} \\
        \hline
        Pearson's $r$ & \textcolor{cyan}{\textbf{0.54}}/\colorbox{lightred}{0.90/-0.89} & \textcolor{cyan}{\textbf{0.59}}/\colorbox{lightred}{0.90/-0.80} & \textcolor{cyan}{\textbf{0.64}}/\colorbox{lightred}{0.89/-0.86} & \textcolor{cyan}{\textbf{0.62}}/\colorbox{lightred}{0.67/-0.68} & \textcolor{orange}{\textbf{0.43}}/\colorbox{lightred}{0.90/-0.92} & \textcolor{red}{\textbf{0.12}}/0.01/0.06  & \textbf{\textcolor{orange}{0.43}}/\colorbox{lightred}{0.21/-0.72}\\
        Spearman's $\rho$ & \textcolor{red}{\textbf{0.02}}/\colorbox{lightred}{0.80/-0.80} & \textcolor{red}{\textbf{-0.02}}/\colorbox{lightred}{0.80/-0.80} & \textcolor{red}{\textbf{-0.07}}/\colorbox{lightred}{0.80/-0.80} & \textcolor{red}{\textbf{-0.11}}/\colorbox{lightred}{0.40/-0.80} & \textcolor{red}{\textbf{0.11}}/\colorbox{lightred}{0.80/-1.00} & \textcolor{red}{\textbf{0.07}}/0.20/0 & \textbf{\textcolor{cyan}{0.50}}/\colorbox{lightred}{0.40/-0.80}\\
        Kendall's $\tau$ & \textcolor{red}{\textbf{0.01}}/\colorbox{lightred}{0.67/-0.67} & \textcolor{red}{\textbf{-0.04}}/\colorbox{lightred}{0.67/-0.67} & \textcolor{red}{\textbf{-0.08}}/\colorbox{lightred}{0.67/-0.67} & \textcolor{red}{\textbf{-0.15}}/\colorbox{lightred}{0.33/-0.67} & \textcolor{orange}{\textbf{0.14}}/\colorbox{lightred}{0.67/-1.00} & \textcolor{red}{\textbf{0.09}}/0/0 & \textbf{\textcolor{orange}{0.38}}/\colorbox{lightred}{0.33/-0.67}\\

        \hline \hline
        \multicolumn{1}{c}{} & \multicolumn{1}{c}{} & \multicolumn{5}{c}{\textbf{\perseval-HJ$^{\text{BScore}}$}} & \multicolumn{1}{c}{} & \multicolumn{1}{c}{}\\
        \hline
        \textbf{HJ-Corr.} & \textbf{PSE-RG-L} & \textbf{PSE-RG-SU4} & \textbf{PSE-METEOR} & \textbf{PSE-BLEU} & \textbf{PSE-JSD} & \textbf{PSE-BScore} & \textbf{PSE-InfoLM-$\alpha\beta$} \\
        \hline
        Pearson's $r$ & \textcolor{cyan}{\textbf{0.65}}/\colorbox{lightred}{0.93/-0.76} & \textcolor{cyan}{\textbf{0.59}}/\colorbox{lightred}{0.92/-0.66} & \textcolor{orange}{\textbf{0.47}}/\colorbox{lightred}{0.89/-0.73} & \textcolor{cyan}{\textbf{0.51}}/\colorbox{lightred}{0.70/-0.52} & \textcolor{teal}{\textbf{0.81}}/\colorbox{lightred}{0.93/-0.80} & \textcolor{teal}{\textbf{0.83}}/-0.04/-0.17 & \textbf{\textcolor{cyan}{0.56}}/\colorbox{lightred}{0.24/-0.50}\\
        Spearman's $\rho$ & \textcolor{cyan}{\textbf{0.60}}/0.80/-0.80 & \textcolor{cyan}{\textbf{0.56}}/\colorbox{lightred}{0.80/-0.80} & \textcolor{orange}{\textbf{0.24}}/\colorbox{lightred}{0.80/-0.80} & \textcolor{cyan}{\textbf{0.47}}/\colorbox{lightred}{0.40/-0.80} & \textcolor{cyan}{\textbf{0.69}}/\colorbox{lightred}{0.80/-1.00} & \textcolor{cyan}{\textbf{0.64}}/0.20/0 & \textbf{\textcolor{teal}{0.84}}/\colorbox{lightred}{0.40/-0.40} \\
        Kendall's $\tau$ & \textcolor{teal}{\textbf{0.72}}/\colorbox{lightred}{0.67/-0.67} & \textcolor{teal}{\textbf{0.71}}/\colorbox{lightred}{0.67/-0.67} & \textcolor{orange}{\textbf{0.25}}/\colorbox{lightred}{0.67/-0.67} & \textcolor{teal}{\textbf{0.66}}/\colorbox{lightred}{0.33/-0.67} & \textcolor{teal}{\textbf{0.82}}/\colorbox{lightred}{0.67/-1.00} & \textcolor{teal}{\textbf{0.81}}/0/0 & \textbf{\textcolor{teal}{0.73}}/\colorbox{lightred}{0.33/-0.33}\\
       
        \hline \hline
        \multicolumn{1}{c}{} & \multicolumn{1}{c}{} & \multicolumn{5}{c}{\textbf{\perseval-HJ$^{\text{InfoLM}-\alpha\beta}$}} & \multicolumn{1}{c}{} & \multicolumn{1}{c}{}\\
        \hline
        \textbf{HJ-Corr.} & \textbf{PSE-RG-L} & \textbf{PSE-RG-SU4} & \textbf{PSE-METEOR} & \textbf{PSE-BLEU} & \textbf{PSE-JSD} & \textbf{PSE-BScore} & \textbf{PSE-InfoLM-$\alpha\beta$} \\
        \hline
        Pearson's $r$ & \textcolor{orange}{\textbf{0.49}}/\colorbox{lightred}{0.91/-0.76} & \textcolor{cyan}{\textbf{0.60}}/\colorbox{lightred}{0.90/-0.71} & \textcolor{cyan}{\textbf{0.66}}/\colorbox{lightred}{0.88/-0.61} & \textcolor{cyan}{\textbf{0.58}}/\colorbox{lightred}{0.67/-0.35} & \textcolor{teal}{\textbf{0.78}}/\colorbox{lightred}{0.90/-0.79} & \textcolor{cyan}{\textbf{0.51}}/-0.04/-0.10 & \textcolor{teal}{\textbf{0.76}}/\colorbox{lightred}{0.20/-0.67} \\
        Spearman's $\rho$ & \textcolor{orange}{\textbf{0.21}}/\colorbox{lightred}{0.80/-0.40} & \textcolor{teal}{\textbf{0.60}}/\colorbox{lightred}{0.80/-0.40} & \textcolor{cyan}{\textbf{0.50}}/\colorbox{lightred}{0.80/-0.40} & \textcolor{cyan}{\textbf{0.55}}/\colorbox{lightred}{0.40/-0.40} & \textcolor{teal}{\textbf{0.62}}/\colorbox{lightred}{0.80/-0.80} & \textcolor{cyan}{\textbf{0.47}}/\colorbox{lightred}{0.20/-0.60} & \textcolor{teal}{\textbf{0.61}}/\colorbox{lightred}{0.40/-0.40}\\
        Kendall's $\tau$ & \textcolor{orange}{\textbf{0.24}}/\colorbox{lightred}{0.67/-0.33} & \textcolor{teal}{\textbf{0.51}}/\colorbox{lightred}{0.67/-0.33} & \textcolor{cyan}{\textbf{0.42}}/\colorbox{lightred}{0.67/-0.33} & \textcolor{cyan}{\textbf{0.47}}/\colorbox{lightred}{0.33/-0.33} & \textcolor{cyan}{\textbf{0.45}}/\colorbox{lightred}{0.67/-0.67} & \textcolor{cyan}{\textbf{0.38}}/\colorbox{lightred}{0/-0.33} & \textcolor{cyan}{\textbf{0.42}}/\colorbox{lightred}{0.33/-0.33} \\
        \hline
    \end{tabular}
}
\caption{{\bf \perseval \space (PSE) Reliability:} Human-judgment (HJ) corr. between PSE$^{\beta=1.7}$-X and \perseval-HJ-X on Microsoft PENS/OpenAI CNN-DM/OpenAI TL;DR (Reddit) datasets. \textbf{Observation 1}: \textit{\perseval-InfoLM-$\alpha\beta$ has maximum highest HJ correlation across all \perseval-HJ variants on PENS dataset}; \textbf{Observation 2}: \textit{\perseval-HJ-InfoLM-$\alpha\beta$ and \perseval-HJ-BScore are optimal meta-evaluation measures since they are most closely aligned to all the seven \perseval\ variants}; \textbf{Observation 3}: \textit{Difference in human annotator ratings does not serve as reliable surrogate for estimating \degress-HJ (and thereby, \perseval-HJ) as evident from \colorbox{lightred}{contradictory results} on \underline{structurally equivalent} OpenAI CNN/DM and TL;DR (Reddit) datasets}; \textbf{Observation 4}: \textit{As a follow-up, the indirect evaluation method can serve for baseline generation but not meta-evaluation}.}
\label{tab: extensive perseval reliability}
\end{table}


\section{Observations and Insights}
In this section, we provide empirical support for the reliability and stability of \perseval\ and show that accuracy leaderboards may be misleading (or, at best, redundant) for personalization analysis.

\subsection{Meta-evaluation of \perseval: Results}

\paragraph{Reliability of \perseval.} \label{sec:reliability}
We compute the HJ-correlation of the seven variants of \perseval\ w.r.t to each of the five \perseval-HJ variants (RG-L, RG-SU4, METEOR, BLEU, and InfoLM-$\alpha\beta$; see Table \ref{tab: extensive perseval reliability} for the results\footnote{\perseval-HJ: Human judgment est.;  Stat. Significance of Corr. (\textcolor{teal}{\textbf{Strong}}, \textcolor{cyan}{\textbf{Moderate}}, \textcolor{orange}{\textbf{Low}}, \textcolor{red}{\textbf{None}}): $p$-value < $0.01$.}). An 11-point hyper-parameter ablation study shows that the optimal correlation is at $\beta$ = 1.7 (Figure \ref{fig:ablation-studies}). We observe that \perseval$^{\beta=1.7}$-InfoLM-$\alpha\beta$ has the overall best correlation across all five \perseval-HJ variants. We further observed that: (a) \perseval-HJ$^{\text{InfoLM}-\alpha\beta}$ as a human-judgment estimate has the best performance of each \perseval-variants (Pearson's $r$ = 0.79; Spearman's $\rho$ = 0.68; Kendall's $\tau$ = 0.47 w.r.t PSE-HJ$^{\text{InfoLM-$\alpha\beta$}}$), and (b) \perseval-InfoLM-$\alpha\beta$ performs the best across. The high performance of \perseval$^{\beta=1.7}$-InfoLM-$\alpha\beta$ can attributed to InfoLM-$\alpha\beta$, as an accuracy (and thereby, distance measure) having a very high system-level HJ-correlation ($\text{Pearson's } \tau \approx 0.95; \text{Kendall's } \tau \approx 0.85$) when compared to the other measures on the CNN/DM dataset, as reported in \cite{infolm}.

\paragraph{Indirect Evaluation Method is Inadequate for Meta-evaluation.} As discussed in section \ref{sec:10models}, the method of evaluating personalization of RLHF-tuned policies is indirect and limited in scope, since these policies are trained on annotator profile agnostic ratings and not on their reading history. Hence, although the OpenAI dataset may be used for generating baselines in this limited setup, they are not suitable for meta-evaluation of any personalization measure. To validate this point, we design an OpenAI dataset-oriented \perseval-HJ variant where we model the divergence between policy-generated surrogate summaries ($\sigma(s_{u_{ij}},s_{u_{ik}})$) with the difference in the ratings given by the annotators (i.e., $|r(s_{u_{ij}}) - r(s_{u_{ik}})|$) and that between combined gold-references ($\sigma(u_{ij},u_{ik})$) with the difference in the average ratings of all summaries included in the combined gold reference (i.e., $|\bar{r}(u_{ij}) - \Bar{r}(u_{ik})|$). For calculating penalty due to \edp, we take the distance between gold-reference and generated summary to be $|r(s_{u_{ij}}) - \bar{r}(u_{ij})$). We then create seven variants of \perseval-HJ-X as in the PENS dataset-based evaluation, where X stands for the measure used for estimating the \degress\ normalization factor (i.e., the distance of $s_{u_{i\bullet}}$ and $u_{i\bullet}$ from the document $\doc_i$; see equation \ref{eq:Degree-of-Responsiveness}). It is evident from Table \ref{tab: extensive perseval reliability} that the HJ-correlation so computed is unstable since the results are contradictory even though both the OpenAI datasets are structured in exactly the same manner with four common policies. This shows that the difference in rating cannot be an alternative to the direct similarity judgment that was collected via the survey.   

\begin{table*}[t]
\small
\centering
\begin{adjustbox}{max width=\textwidth,max totalheight=\textheight,keepaspectratio}
\begin{tabular}{lcccccccc}
\hline  
\textbf{Inter-Corr.} & \textbf{EG-RG-L} & \textbf{EG-RG-SU4} & \textbf{EG-METEOR} & \textbf{EG-BLEU} & \textbf{EG-JSD} & \textbf{EG-BScore} & \textbf{EG-InfoLM-$\alpha\beta$} \\
\hline
Spearman's $\rho$ & 0.903 & 0.9758 & 0.8667 & 0.9636 & 0.997 & 0.8476 & 0.9879 \\
Kendall's $\tau$ & 0.8222 & 0.9111	& 0.7333 & 0.9111 &	
0.9888 & 0.7047 & 0.9555 \\
\hline
\end{tabular}
\end{adjustbox}
\caption{{\bf EGISES (EG-X) Paradox:} PSE-X$^{\beta=1.7}$ rank-disagreement ($<1$ inter-corr.) due to \edp.}
\label{tab:EGISES Paradox}
\vspace{-3mm}
\end{table*}

\paragraph{Evidence of the EGISES Paradox.} \label{sec:egises not enough}
We look at the inter-correlations between EGISES-rank and \perseval-rank of the selected models on the PENS dataset. The values less than 1 across all the variants (see Table \ref{tab:EGISES Paradox}) suggest that \perseval\ is not just an offset of EGISES and is not entailed by it (which would have otherwise been the case if the EGISES paradox was non-existent in reality).    

\paragraph{Stability of \perseval.} \label{sec:stability results}
We compute $\epsilon$-$\delta$-stability of the best performing \perseval-InfoLM-$\alpha\beta$ on PENS over the ten models as per the sampling method and stability definitions in Section \ref{sec:sampling method}. We observe \perseval \space to be 0.0024\footnote{Maximum of $\delta$-bias and $\delta$-variance over all ten models}-strongly-stable w.r.t Spearman-$\epsilon$ = 1 and Kendall-$\epsilon$ = 1 (see Table \ref{tab:Stability of Perseval}). This establishes a very high stability of \perseval \space along with its reliability, making it robust. For detailed results, see Appendix \ref{app:detailed stability results}.

\subsection{Accuracy Leaderboards may Mislead} 
We demonstrate that accuracy leaderboards, at best, are redundant and \perseval-rank is sufficient to capture personalization. For this, we generate the Borda-Kendall consensus-based aggregated rank \citep{aggregatedranking-Borda}
and compare the HJ-correlation with that of \perseval-InfoLM-$\alpha\beta$. We observe that the stand-alone HJ correlation has the same strength w.r.t the aggregated rank for accuracy measures like BLEU, thereby rendering them redundant in the context of personalization evaluation, while is significantly higher (Spearman $\rho$: 0.51+$\uparrow$; Kendall $\tau$: 0.40+$\uparrow$) than that of measures such as RG-L, InfoLM-$\alpha\beta$ (see Table \ref{tab:Borda-Consensus}). This indicates that accuracy ranking can also inject noise. 

\begin{table*}[t]
\small
\centering
\begin{tabular}{clccccccc}
\hline
\multicolumn{1}{c}{} & \multicolumn{1}{c}{} & \multicolumn{5}{c}{PENS Test Dataset Sample Set (Random Selection)} & \multicolumn{1}{c}{} & \multicolumn{1}{c}{} \\ 
\hline  
\textbf{Ranking} & \textbf{Models} & \textbf{100\%} & \textbf{80\%} & \textbf{60\%} & \textbf{40\%} & \textbf{20\%} & \textbf{Bias} & \textbf{Variance} \\
\hline
1 & \text{BigBird-Pegasus} & \textbf{0.1496} & \textbf{0.153} & \textbf{0.1535} & \textbf{0.1558} & \textbf{0.1564} & 0.0024	& 5.81E-06 \\
2 & \text{SimCLS} & 0.1325 & 0.1351 & 0.1356 & 0.1357 &	0.138 &	0.0018 & 3.08E-06\\
3 & \text{BRIO} & 0.1122 & 0.1143 & 0.1151 & 0.116 & 0.1155 & 0.0013 & 1.77E-06\\
4 & \text{ProphetNet} & 0.098 & 0.1003 & 0.1018 & 0.1012 & 0.1031 & 0.0017 & 2.90E-06\\
5 & \text{T5 (Base)} & 0.088 & 0.0899 & 0.0905 & 0.091 & 0.0912 & 0.0012 & 1.33E-06\\ \hdashline
6 & \text{PENS-NAML T1} & 0.0355 & 0.0364 & 0.0367 & 0.0376 & 0.039 & 0.0012 & 1.41E-06\\
7 & \text{PENS-NRMS T1} & 0.0315 & 0.0326 & 0.0327 & 0.0331 & 0.033 & 0.0006 & 3.26E-07\\
8 & \text{PENS-EBNR T1} & 0.0206 & 0.0209 & 0.021 & 0.0212 & 0.0228 & 0.0008 & 6.00E-07\\
9 & \text{PENS-NRMS T2} & 0.0103 & 0.0103 & 0.0102 & 0.0107 & 0.0111 & 0.0003 & 1.14E-07\\ 
10 & \text{PENS-EBNR T2} & 0.0096 & 0.0097 & 0.0097 & 0.0103 & 0.0107 & 0.0004 & 1.84E-07 \\
\hline
\end{tabular}
\caption{\textbf{\perseval$^{\beta=1.7}$-InfoLM-$\alpha\beta$ Stability}: 0.0024-strongly-stable w.r.t $\epsilon$-Spearman = 1; $\epsilon$-Kendall = 1.}
\label{tab:Stability of Perseval}
\end{table*}

\begin{table*}[t]
\centering
    \begin{adjustbox}{width=1\textwidth}
        \begin{tabular}{cccccc}
        \hline
             \multicolumn{1}{l}{\textbf{HJ-Corr.}} & \multicolumn{1}{l}{\textbf{PSE-ILM-$\alpha\beta$}} & \multicolumn{1}{l}{\textbf{BK(PSE-ILM-$\alpha\beta$, RG-L)}} & \multicolumn{1}{l}{\textbf{BK(PSE-LM-$\alpha\beta$, ILM-$\alpha\beta$)}} & \multicolumn{1}{l}{\textbf{BK(PSE-ILM-$\alpha\beta$, BLEU)}}\\
            \hline
            \multicolumn{1}{l}{Spearman $\rho$} & \multicolumn{1}{c}{\textbf{0.6849}} & \multicolumn{1}{c}{0.1656} & \multicolumn{1}{c}{-0.3447} & \multicolumn{1}{c}{0.632}\\ 

            \multicolumn{1}{l}{Kendall $\tau$} & \multicolumn{1}{c}{\textbf{0.4667}} & \multicolumn{1}{c}{0.0698} & \multicolumn{1}{c}{-0.3027} & \multicolumn{1}{c}{0.46}\\
             \hline
        \end{tabular}
     \end{adjustbox}
    \vspace{ -5mm}
    \caption{\textbf{Accuracy-leaderboards may mislead}: HJ-Corr. of Borda-Kendall (BK) consensus-based aggregated rank vs. PSE$^{\beta=1.7}$-InfoLM (ILM)-$\alpha\beta$.}
    \label{tab:Borda-Consensus}
    \vspace{-3mm}
\end{table*}

\subsection{Computational Complexity of \perseval.} We now show that the best performing \perseval-$\alpha\beta$ is linear time w.r.t to summary lengths  - i.e., $O(l_s+l_u)$ for a fixed evaluation dataset $|\textbf{D}|$ where $l_{s_u} \ \text{and} \ l_u$ are lengths of model-generated summary $s_u$ and user-generated reference summary $u$.

\begin{theorem}[\perseval\ Time Complexity]
    The best worst-case time complexity of \perseval\ is $O\left(|\textbf{D}| \cdot (l_{s_u}+l_u)\right)$.
\end{theorem}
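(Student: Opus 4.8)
The plan is to decompose the system-level score (Eq.~\ref{eq:syslevelperseval}) into its constituent sub-computations --- the pairwise distances $\sigma$, the \degress\ aggregation (Eq.~\ref{eq:degress}), and the penalty factors \adp, \acp, \edp\ (Eqs.~\ref{eq:adp}, \ref{eq:acp}, \ref{eq:perseval}) --- and to bound each under two structural facts that hold for the best-performing InfoLM-$\alpha\beta$ instantiation: (a) the number of gold references per document, $|\textbf{U}_{d_i}|$, is bounded by a constant $U_{max}$ (the PENS construction guarantees an average of four references per article, and the benchmarking treats this as fixed); and (b) the distance metric admits an \emph{amortized} evaluation in which each summary is preprocessed once in time linear in its length, after which any pairwise distance is $O(1)$.

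First I would establish (b) for InfoLM-$\alpha\beta$. For a summary $s$ of length $l_s$, the aggregated masked distribution fed to the divergence is a weighted sum of the $l_s$ token-wise masked distributions, each a single vocabulary-sized ($V$) distribution from one masked-LM forward pass; hence it is computable in $O(l_s\cdot V)=O(l_s)$ time, folding the constant $V$ and the per-pass model cost into the hidden constant. The divergence $\mathcal{D}_{AB}^{\alpha,\beta}$ between two \emph{precomputed} distributions is a fixed arithmetic expression over the $V$ coordinates, hence $O(V)=O(1)$. Thus after an $O(l_s)$ preprocessing pass per summary, every call to $\sigma$ is constant time.

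Next I would fix a document $d_i$ and enumerate the required distances: the intra-set distances $\sigma(u_{ij},u_{ik})$ and $\sigma(s_{u_{ij}},s_{u_{ik}})$ over all pairs $(j,k)$, the document distances $\sigma(u_{ij},d_i)$ and $\sigma(s_{u_{ij}},d_i)$ used in the weights $w$, and the accuracy distances $\sigma(s_{u_{ij}},u_{ij})$ feeding \adp\ and \acp. Precomputing the representations of the $|\textbf{U}_{d_i}|$ expected summaries, the $|\textbf{U}_{d_i}|$ generated summaries, and the document costs $O\big(|\textbf{U}_{d_i}|\,(l_{s_u}+l_u)+l_{d_i}\big)$, which, with $|\textbf{U}_{d_i}|\le U_{max}$ constant, reduces to $O(l_{s_u}+l_u)$ (the document representation being a one-time additive term per document). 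Given these representations there are $O(|\textbf{U}_{d_i}|^2)=O(1)$ pairwise distances, each $O(1)$; and the softmax normalizations in $X_{ijk},Y_{ijk}$, the $\min/\max$ ratio in \degress, and the $\min$, mean, shifted sigmoids, $\exp$ and $\ln$ in \adp, \acp, \edp\ are all $O(1)$ arithmetic once the distances are in hand. Hence the per-document cost is $O(l_{s_u}+l_u)$.

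Finally, summing the per-document cost over the $|\textbf{D}|$ documents (the outer averaging in Eq.~\ref{eq:syslevelperseval} is $O(1)$ per term) yields the claimed $O\big(|\textbf{D}|\cdot(l_{s_u}+l_u)\big)$. The step deserving the most care --- and the real obstacle --- is justifying the amortization in (b): a naive implementation recomputes the masked distributions inside each of the $O(|\textbf{U}_{d_i}|^2)$ pairwise calls, incurring $O(|\textbf{U}_{d_i}|^2(l_{s_u}+l_u))$ per document and hence a super-linear bound. The linear bound is attained only by caching each summary's aggregated distribution after a single linear pass and reusing it across all pairings; it is precisely this decomposability of InfoLM-$\alpha\beta$ --- distance as a cheap functional of independently precomputable per-summary statistics --- that makes it the ``best'' metric in the sense of the theorem, whereas a metric requiring a joint $O(l_{s_u}\cdot l_u)$ alignment would not attain the stated complexity.
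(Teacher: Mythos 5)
Your proof is correct and follows essentially the same route as the paper's: bound the number of gold references per document, $|\textbf{U}_{d_i}|$, by a constant; observe that only a constant number of distance evaluations is therefore needed per document (for \degress, \adp, and \acp\ alike); charge each InfoLM-$\alpha\beta$ distance $O(l_{s_u}+l_u)$ with the vocabulary size and model size folded into the constant; and multiply by $|\textbf{D}|$. Your closing remark that an alignment-style metric costing $O(l_{s_u}\cdot l_u)$ would not attain the bound also matches the paper, which classifies BLEU, METEOR, and BERTScore exactly this way and reserves the linear bound for the ROUGE variants, JSD, and InfoLM-$\alpha\beta$.

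One correction: the amortization step you single out as ``the real obstacle'' is not actually needed for the asymptotic claim. Under your own assumption (a) that $|\textbf{U}_{d_i}|\le U_{max}$ is a constant, even the naive implementation that recomputes the masked distributions inside every pairwise call costs $O\bigl(|\textbf{U}_{d_i}|^2(l_{s_u}+l_u)\bigr)=O(l_{s_u}+l_u)$ per document, which is not super-linear. Caching each summary's aggregated distribution is a sensible constant-factor optimization, but the theorem's bound holds without it; this is precisely how the paper argues, treating each of the constantly many calls to $\sigma_X$ as an independent $O(l_{s_u}+l_u)$ computation.
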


\begin{proof}
Let $t_{\sigma_{X}}$ be the time-complexity of computing the distance using measure $\sigma_X$.\footnote{In this paper, X: {RG-L, RG-SU4, METEOR, BLEU, JSD, BScore (we assume pre-computed embeddings), InfoLM-$\alpha\beta$} which uses BERT Base (uncased; 110M params) as pre-trained Masked Language Model.}
\begin{equation*}\small
    \begin{split}
        & O(\degress(s_{u_{ij}} |(d_i, u_{ij}))) = O(|\textbf{U}_{\doc_i}| \cdot t_{\sigma_{X}}) = O(t_{\sigma_{X}}) \\ & {\because |\textbf{U}_{\doc_i}| = \text{number of reference summaries per article} \leq \text{constant $C$}};\\
        & \implies O(\degress) = |\textbf{D}| \cdot (|\textbf{U}_{\doc_i}| \cdot O(\degress(s_{u_{ij}} |(d_i, u_{ij})))) = |\textbf{D}| \cdot O(t_{\sigma_{X}})\\\\
        & \text{Now, } O(\edp(s_{u_{ij}} |(d_i, u_{ij})) = (O(\adp(s_{u_{ij}} |(d_i, u_{ij}))) + O(\acp(s_{u_{ij}} |(d_i, u_{ij})))) = O(\textbf{U}_{\doc_i} \cdot t_{\sigma_{X}}) = O(t_{\sigma_{X}})\\
        & \implies O(\perseval) = |\textbf{D}| \cdot O(t_{\sigma_{X}}) = O(|\textbf{D}| \cdot t_{\sigma_{X}})\\\\
        & \text{For $\sigma_X \in \{\text{RG-L, RG-SU4, JSD, InfoLM-$\alpha\beta$}\}$: } O(t_{\sigma_{X}}) = O(l_{s_u}+l_u);\\ & {\text{Suffix-tree based matching for RG;}}\\
        & {\text{No. of layers, dimensions, and vocabulary size for a fixed MLM ({BERT}) used in InfoLM can be taken as constant}}\\\\
        & \text{For $\sigma_X \in \{\text{BLEU, METEOR, BScore}\}$: } O(t_{\sigma_{X}}) = O(l_{s_u} \cdot l_u);\\
        & {\text{No. of layers, dimensions, and vocabulary size for a fixed BERT model used in BScore can be taken as constant}}\\\\
        & \therefore O(\perseval) = O\left(|\textbf{D}| \cdot (l_{s_u}+l_u)\right) \ \text{for the best-performing \perseval-$\alpha\beta$}
    \end{split}
\end{equation*}
\end{proof}
For computing \perseval, we observe an average runtime of {2:37} minutes for each of the evaluated ten models on the PENS dataset (3840 articles), which is reasonable for an offline procedure.\footnote{System specifications: Machine architecture: $x86\_64$; CPU: Intel(R) Xeon(R) Silver 4216 CPU @ 2.10GHz; CPU Cores: 16; Thread(s) per core: 2.}

\section{Related Work}
\paragraph{Evaluation of Personalization}
Personalization evaluation has been well studied in recommendation systems (recsys) \citep{recsys-eval-survey}, such as metrics based on the Jaccard Index, rank-order edit distance \citep{personalizationmeasure1}, MAE/RMSE/Hit-Ratio \citep{recsyseval-1}, and nDCG (normalized Discounted Cumulative Gain) \citep{personalizationmeasure2}. A comprehensive compilation of all recsys-oriented metrics and their applications can be found in \citep{recsys-eval-survey,recsys-eval-survey2}. While relevant to recsys, these metrics are not pertinent for text summarization since they rely on human feedback (such as clicks and likes) on a rank list of potentially preferable ``\textit{items}" -- a situation that does not exist for summarizers. A survey-based qualitative analysis of the usefulness of model-generated summaries was proposed by \citet{personalized-summarization-utility}. Although this work establishes empirically that model-generated summary utility is subjective (as argued in this paper), yet to date, the only work on the formal quantitative evaluation of personalization in summarization models is EGISES \citep{egises} which, however, can only capture responsiveness. There is a growing interest in learned summarization evaluation metrics, as opposed to designed ones \cite{Learned-summarization-measure-ACL-2017, Learned-summarization-measure-NAACL-2018, RLHF-based-1-2019, LLM-based-eval-EMNLP-2023}. LLM-based automated summarization evaluation has also been studied \cite{LLM-based-summarization-eval-1,LLM-based-eval-NeurIPS-2023}. However, to the best of our knowledge, these works have not focused on evaluating personalized summarization models where subjectivity in saliency needs to be respected (and not normalized). Any such automated evaluator needs to be fine-tuned on large volumes of user profiles having varied subjective expectations. This sort of dataset is hard to design, and such a project only makes sense if designed metrics are not able to achieve minimum human-judgment correlation. 

\paragraph{Personalized Summarization}
\textbf{Aspect-based.} An aspect-based summarization model generates summaries that are coherent with the aspects (i.e., themes/topics) therein \citep{aspect-based-7-2018,aspect-based-4,aspect-based-3,aspect-based-6,aspect-based-2,aspect-based-5, aspect-based-1-2022}.  While explicit aspects can be restrictive and rather broad (e.g., the MA news training dataset where six broad aspects are identified: “sport", “health", “travel", “news", “science technology", “tv showbiz"), implicit aspect-based summarization implies augmenting the aspect query with concepts that are related to the predefined aspects. Although these models have specific use-cases, they are not trained to adapt to the reader's evolving profile (i.e., reading behavioral pattern) that constitutes discourse-level interest drift (and not just topic-level static interests). Also, the evaluation was w.r.t accuracy using standard ROUGE-variants.

\textbf{Interactive Human-feedback-based.}
One of the earliest interactive interface-based iterative personalized summarization frameworks was proposed by \citet{interactive-hf-based-emnlp-2011} where users could click on specific sentences in the generated summary that are of their interest (implicit preference), and read the associated context (i.e., surrounding text) of the selected sentence before sending this preference as feedback to the model for a revised version. A similar interactive interface-based framework was proposed by \citet{interactive-hf-based-vldb-2018} where users can iteratively select sentences and the phrases therein that they prefer (and do not prefer as well), while the summarizer, an ILP-based model proposed by \citet{ILP-based-emnlp-2015}, updates the summary based on this feedback until the user is satisfied. On similar lines, \citet{interactive-hf-based-2021} introduced a personalized summarization method for extractive summarization. Extracted summary concepts are presented to readers for their feedback, which is used to iteratively fine-tune the summary until no further negative feedback is received. \citet{interactive-hf-based-arxiv-2021} proposed a framework where the acceptance or rejection of summary sentences was made dynamic as the summary gets generated on-the-fly. However, all these works have been evaluated based on standard accuracy evaluations (such as ROUGE variants).

\textbf{User-preference Trained Reward Model-based.}
Another way of inducing personalization in models is to train a base model (pre-trained or supervised fine-tuning) as an agent within a reinforcement learning framework (usually policy-gradient based) using a reward model (RM) as the environment. The RM is trained on human preferences to predict human ratings for specific actions of the agent (i.e., selecting specific words/sentences), thereby providing the rewards \citep{openai,reward-model-1}.  However, whether these models can explicitly ``remember" individual preferences (or even that of user groups with similar interests) is still to be probed and not quite clear. Nevertheless, the evaluations were on accuracy only.  

\textbf{User-preference-history-based.}
There has been considerable work in user preference-history-based product review summarization. The user profile is usually modeled in terms of \textit{discrete attributes} such as rating, user-ID, and product-ID, and \textit{history text} that represents user-written historical review-summaries. Some of the proposed models use user-specific vocabularies to predict \textit{user-preference words} that in turn serve as a guide for generating summaries \cite{history-based-ijcai-2018,history-based-aaai-2019,history-based-sigir-2020}. Others have proposed models that learn user preference by jointly training these discrete attributes and the historical review-summaries \cite{history-based-cikm-2019,history-based-sigir-2021,history-based-acl-2023}, where the historical summaries provide information about the writing-style, purchasing preferences, and aspect-of-interest of the users. These works, however, do not fit into the general setup of personalized summarization because the summaries generated are not aligned to a prospective buyer's (i.e., a consumer of review-summary or reader in our parlance) preference behavior, but rather tuned to a \textit{different set of buyers} who are active reviewers and who have provided gold-reference review summaries of their own reviews (i.e., the review-to-summary is a one-to-one mapping and \textit{{not} subjective}). Nevertheless, the evaluation has been done using accuracy measures (ROUGE variants) only. So far, the only pertinent work incorporating the reader's history as preference is the proposed models that were designed using the PENS framework \citep{pens}, which we studied extensively (see Section \ref{sec:10models}). It is clear from our study that they need significant improvement in terms of personalization (as measured by \perseval).

\section{Conclusion}
In this paper, we presented \perseval, a corrective measure for EGISES (proposed by \citet{egises}), which, to the best of our knowledge, is the only known personalization measure for summarizers. We first introduced the concept of \textit{responsiveness}, in contrast to \textit{personalization}, as a measure to evaluate the capacity of a model to discern the differences in reader profiles (i.e., reading histories) and generate reader-specific summaries that maintain this difference proportionately. We then showed that EGISES measures the former. We thereby proved theoretically and empirically on the real-world PENS dataset that measuring responsiveness does not imply measuring personalization since there can be models that generate distinctly different summaries for different reader profiles (i.e., high responsiveness) but are quite off from the expected summaries (i.e., low accuracy and thereby low user-experience (UX)). We then formulated \perseval\ (more specifically, \degress) as a discounted EGISES where the discount factor is a penalty due to accuracy drop called \edp\ (Effective \degress\ Penalty Factor). We analyzed the ten SOTA summarization models using seven variants of \perseval\ and observed that the model leaderboard reliability depends on the chosen variant. We further observed that the variant \perseval-InfoLM-$\alpha\beta$ performs best regarding rank-stability, a meta-evaluation measure we proposed in this paper. We also proposed a novel survey-based meta-evaluation protocol for human-judgment (HJ) to analyze the extent to which human annotators agree with the design principles of \perseval\ at a cognitive level. We found that \perseval-InfoLM-$\alpha\beta$ has the highest overall HJ-correlation (Pearson's $r$ = 0.79; Spearman's $\rho$ = 0.68; Kendall's $\tau$ = 0.47). We finally established that separate accuracy leaderboards for personalized summarizers can be misleading and \perseval\ can serve as a unified measure, thereby emphasizing that personalization and accuracy are inseparable aspects of UX. 

\section*{Discussions \& Future Directions}
\textbf{User guideline:}  \perseval\ as a measure is useful in summarization tasks where the user experience of the utility of the summaries generated by any model will be subjective and will depend upon the user's current preferences (which itself is a reflection of past activities). For example, executive summaries and Minutes-of-meetings (MoMs) need to be personalized (as different participants might have different takeaways/todos from the meeting summary) without compromising on the accuracy of the proceedings. In the same way, news recommendation platforms are also likely to benefit from personalized summarization as it increases the likelihood of the article being clicked and read in full if the title or summary is more personalized as per the user profile. In such use cases, \perseval\ should be adopted rather than accuracy measures or P-Accuracy. It is important to note that \textit{\underline{we do not rule out} existing conventional accuracy measures and also P-Accuracy} (as proposed in \cite{egises}) in favor of \perseval. These measures have their own important role to play for different use cases. Specifically, summarization tasks that do not need personalization should be better evaluated using standard accuracy measures. An example would be the research paper summarization task where saliency, generally, should imply the core contributions of the paper, and hence, the user experience of such summary utility does not depend on specific user preference history. Likewise, there are use cases where P-Accuracy can be more useful, especially when accuracy is the primary objective, yet summarization models to be evaluated also claim to be personalized. An example would be the task of multi-document update summarization for very specific like-minded stakeholders (say, members of a government event-monitoring unit engaging in daily tweets) whose topics of preferences are already known. In this case, P-Accuracy will factor in the lack of personalization (if any) and provide a more realistic accuracy score (and, thereby, the accuracy leaderboard).

\textbf{Future directions:} In this work, we analyze the effect of seven variants of \perseval\ on SOTA summarization models, out of which the ROUGE-variants, BLEU, and METEOR are defined on the string space, JSD is defined on the probability space, BERTScore is defined on the embedding space, and InfoLM is defined on the probability space that is generated from the embedding space using a masked-LM. Although these cover all the most common algebraic spaces on which \perseval\ can be defined, it remains to be understood how other alternate measures on the same spaces, such as BaryScore \citep{baryscore}, MoversScore \citep{moverscore}, DepthScore \citep{depthscore}, and other variants of InfoLM using multiple Csiszar f-divergences, will behave w.r.t HJ-correlation. Finally, we have only explored one method of estimating \perseval-HJ (i.e., mimicking the human way of computing \perseval\ using RG-L as the distance between model-generated summary and human-reference) for analyzing HJ-correlations. However, there can be other alternative methods of estimating \perseval-HJ, including incorporating inter-annotator-agreement statistics (such as Kappa statistic). We also consider it a promising future direction to study the effectiveness of \perseval\ as an objective function to steer the fine-tuning of the personalized summarization models under a Reward Model-driven reinforcement learning setup. In general, the current work can potentially open up directions of systematic evaluation studies of personalization capabilities in models, particularly LLMs with their In-context Learning capabilities, that apply to all AI challenges where the quality of the model output is subjective to the user's past preferences. We, therefore, encourage the research community to critically study our work and build on it. 

\section*{Ethics Statement}
We would like to declare that we used the PENS dataset prepared and released by Microsoft Research. Our human-judgment survey was conducted according to the norms set by the Institutional Review Board (IRB) and respects participant anonymity as per guidelines.

\if 0
\section*{Acknowledgements}
Upon acceptance, due credits will be given to research assistants who have helped generate correlation results from the evaluation scripts.
\fi

\bibliography{custom}
\bibliographystyle{tmlr}

\appendix

\section{Model Details}
\label{app:modelstudied}
We briefly introduce the SOTA summarization models that were analyzed to understand their degree-of-personalization below:

\begin{enumerate}
    \item \textbf{PENS-NRMS Injection-Type 1}: The PENS framework \citep{pens} takes user embedding as input along with the news article to generate a personalized summary for that user. To generate user embedding, NRMS (Neural News Recommendation with Multi-Head Self-Attention) \citep{nrms} is used. It includes a news encoder that utilizes multi-head self-attentions to understand news titles. The user encoder learns user representations based on their browsing history and uses multi-head self-attention to capture connections between news articles. Additive attention is added to learning the news and user representations more effectively by selecting important words and articles. Here, Injection-Type 1 indicates that NRMS user embedding is injected into PENS by initializing the decoder’s hidden state of the headline generator, which will influence the summary generation.
    
    \item \textbf{PENS-NRMS Injection-Type 2}: To generate a personalized summary, NRMS user embedding is injected into attention values (Injection-Type 2) of PENS that helps to personalize attentive values of words in the news body.
    
    \item \textbf{PENS-NAML Injection-Type 1}: NAML (Neural News Recommendation with Attentive Multi-View Learning) \cite{naml} incorporates a news encoder that utilizes a multi-view (i.e., titles, bodies, and topic categories) attention model to generate comprehensive news representations. The user encoder is designed to learn user representations based on their interactions with browsed news. It also allows the selection of highly informative news during the user representation learning process. This user embedding is injected into the PENS model using Type-1 for personalization.
    
    \item \textbf{PENS-EBNR Injection-Type 1}: EBNR (Embedding-based News Recommendation for Millions of Users) \cite{ebnr} proposes a method for user representations by using an RNN model that takes browsing histories as input sequences. This user embedding is injected using Type 1 into the PENS model for personalization.
    
    \item \textbf{PENS-EBNR Injection-Type 2}: This personalized model injects EBNR user embedding into PENS using type-2. 
    
    \item \textbf{BRIO}: Instead of a traditional MLE-based training approach, BRIO \cite{brio} assumes a non-deterministic training paradigm that assigns probability mass to different candidate summaries according to their quality, thereby helping it to better distinguish between high-quality and low-quality summaries.
    
    \item \textbf{SimCLS}: SimCLS (A Simple Framework for Contrastive Learning of Abstractive Summarization) \cite{simcls} uses a two-stage training procedure. In the first stage, a Seq2Seq model (BART \citep{bart}) is trained to generate candidate summaries with MLE loss. Next, the evaluation model, initiated with RoBERTa is trained to rank the generated candidates with contrastive learning.
    
    \item \textbf{BigBird-Pegasus}: BigBird \cite{bigbird} is an extension of Transformer based models designed specifically for processing longer sequences. It utilizes sparse attention, global attention, and random attention mechanisms to approximate full attention. This enables BigBird to handle longer contexts more efficiently and, therefore, can be suitable for summarization. 
    
    \item \textbf{ProphetNet}: ProphetNet \cite{prophetnet} is a sequence-to-sequence pre-trained model that employs n-gram prediction using the n-stream self-attention mechanism. ProphetNet optimizes n-step ahead prediction by simultaneously predicting the next n tokens based on previous context tokens, thus preventing overfitting on local correlations. 
    
    \item \textbf{T5}: T5 (Text-To-Text Transfer Transformer) is based on the Transformer-based Encoder-Decoder architecture that operates on the principle of the unified text-to-text task for any NLP problem, including summarization. Some recent analyses on the performance of T5 on summarization tasks can be found in \cite{T5-1,T5-2,T5-3}.   
    
\end{enumerate}

\section{Accuracy and Performance}
\subsection{Accuracy Measures Compared}\label{app:accuracymeasuresbrief}
\begin{enumerate}
    \item \textbf{RG-L}: ROUGE-L (Recall-Oriented Understudy for Gisting Evaluation) \citep{rouge-l} calculates the longest common subsequence between the generated summary and the reference summary and then measures the precision, recall, and F1 score based on this comparison. 
    \item \textbf{RG-SU4}: ROUGE-SU4 \citep{rouge} was designed to consider skip-bigram matches as well, which allows for non-contiguous n-gram matches.
    \item \textbf{BLEU}: BLEU (Bilingual Evaluation Understudy) \citep{bleu} is a popular evaluation metric that measures the precision of n-gram matches between the model-generated summaries and the reference summaries. BLEU computes a modified precision score for various n-gram lengths and then combines them using a geometric mean. 
    \item \textbf{METEOR}: METEOR (Metric for Evaluation of Translation with Explicit ORdering) \citep{meteor} matches unigrams based on surface forms, stemmed forms, and meanings and then calculates score using a combination of precision, recall, and the order-alignment of the matched words w.r.t reference summary.
    \item \textbf{Jensen-Shannon Distance}: The Jensen-Shannon Distance (JSD) \citep{jsd} is a metric used in summarization evaluation to measure the dissimilarity between probability distributions of words in a reference summary and a generated summary. It quantifies the information divergence and similarity, providing a nuanced assessment of the semantic content overlap between the two summaries.
    \item \textbf{BertScore}: BertScore (BScore) \citep{bertscore} is a metric for evaluating machine-generated summaries, emphasizing contextual embeddings from BERT to assess both word overlap and contextual relationships. It overcomes the limitations of keyphrase-based measures like ROUGE.
     \item \textbf{InfoLM-$\alpha\beta$}: Given a user-generated reference summary $u$\ and a model-generated summary $s_{u}$, InfoLM \citep{infolm} recursively masks each token position $k$ of both $u$ (denoted $[\boldsymbol{u}]^{k}$) and $s_{u}$ (denoted $[\boldsymbol{s_u}]^{k}$) to obtain individual masked contexts of length $l_{u}$ and $l_{s_u}$ respectively. For each masked context, it uses a pre-trained masked-language model to estimate the corresponding probability distribution over the vocabulary (i.e., $p_{\theta}\left(\cdot \mid [\boldsymbol{\cdot}]^{k} ; M_{\boldsymbol{\theta},h}\right)$), resulting in two bags of distributions of size $l_{u}$ and $l_{s_u}$ for $u$ and $s_u$. The bags of distributions (for both masked $u$ and masked $s_u$) are then averaged out, as follows\footnote{$\gamma_{k}$ are measures of the importance of the $k$-th token in $u$ and $s_u$, respectively s.t. $\sum_{k=1}^{l_{u}}\gamma_{k}=\sum_{k=1}^{l_{s_{u}}} \gamma_{k}=1$. $\gamma_{k}$ are computed using the corpus-level inverse document frequency (IDF) scores.}:
    \begin{equation*}
        p(\cdot \mid \boldsymbol{u} ; M_{\boldsymbol{\theta},h}) \triangleq \sum_{k=1}^{l_u} \gamma_{k} \times p_{\theta}\left(\cdot \mid [\boldsymbol{u}]^{k} ; M_{\boldsymbol{\theta},h} \right)
    \end{equation*}
    \begin{equation*}
        p(\cdot \mid \boldsymbol{s_u} ; M_{\boldsymbol{\theta},h}) \triangleq \sum_{k=1}^{l_{s_u}} \gamma_{k} \times p_{\theta}\left(\cdot \mid [\boldsymbol{\cdot}]^{k} ; M_{\boldsymbol{\theta},h} \right)
    \end{equation*}
    InfoLM then uses a chosen information measure $\mathcal{I}$ to compute the following:
    \begin{equation*}
        \text{InfoLM}(\bold{u},\bold{s_u}) \triangleq \mathcal{I} \left[p(\cdot \mid \boldsymbol{u}), p(\cdot \mid \bold{s_u})\right]
    \end{equation*}
    In our experiments, we chose $\mathcal{I}$ to be $\alpha\beta$-divergence (also called AB-Divergence; $\mathcal{D}_{AB}^{\alpha,\beta}$) \citep{alpha-beta-divergence} where the divergence is defined as:
    \begin{equation}
        \mathcal{D}_{AB}^{\alpha, \beta} = \frac{1}{\beta(\beta+\alpha)} \log \sum p_{i}^{\beta+\alpha} + \\
\frac{1}{\beta+\alpha} \log \sum q_{i}^{\beta+\alpha} - \\
\frac{1}{\beta} \log \sum p_{i}^{\alpha} q_{i}^{\beta}
    \end{equation}
\end{enumerate}

\subsection{Model Rank Aggregation \& Agreement} \label{app:corrmeasures}
\begin{enumerate}
    \item \textbf{Borda-Kendall Consensus based Rank Aggregation:}\label{app:borda-formula}
The Borda-Kendall (BK) consensus entails aggregating a set of permutations, denoted as $\eta^{1}, \ldots, \eta^{L} \in \mathfrak{N}$, which represent the rankings of $N$ models across $L \geq 1$ tasks or instances (in our case, the pair of accuracy rank measure and the \perseval-variant to be aggregated). This aggregation involves summing the ranks of each model and subsequently ranking the obtained sums. Formally:
        \begin{equation*}
            \begin{split}
                & \operatorname{sum}_{n}:=\sum_{l=1}^{L} \eta_{n}^{l} \text { for every } 1 \leq n \leq N \text {, } \\
                & \operatorname{BK}\left(\eta^{1}, \ldots, \eta^{L}\right) = \operatorname{argsort}\left(\operatorname{sum}_{1}, \ldots, \operatorname{sum}_{T}\right)
            \end{split}
        \end{equation*}
        
    \item \textbf{Pearson's Correlation Coefficient} ($r$): 
    $$r = \frac{{}\sum_{i=1}^{n} (x_i - \overline{x})(y_i - \overline{y})}
{\sqrt{\sum_{i=1}^{n} (x_i - \overline{x})^2  \sum_{i=1}^{n}(y_i - \overline{y})^2}}$$
where
    $\overline{x}, \overline{y}$ are the means of the variables $x_i$ and $y_i$ ;
    $n$ = the number of samples.
    \item \textbf{Spearman's $\rho$ Coefficient}: 
    $$\rho = 1- {\frac {6 \sum d_i^2}{n(n^2 - 1)}}$$
    where
    $d$ = the pairwise distances of the ranks of the variables $x_i$ and $y_i$ ;
    $n$ = the number of samples.
    
    \item \textbf{Kendall's $\tau$ Coefficient}:
    $$\tau = \frac{c-d}{c+d} = \frac{S}{\left(\begin{matrix} n \\ 2 \end{matrix} \right)} = \frac{2S}{n(n-1)}$$
    where,
    $c$ = the number of concordant pairs;
    $d$ = the number of discordant pairs.
    
\end{enumerate}

\begin{table*}[t]
\centering
\scalebox{0.75}{
\begin{tabular}{lccccccccc}
\hline
\multicolumn{1}{c}{} & \multicolumn{1}{c}{} & \multicolumn{5}{c}{\textbf{PENS Test Dataset Sample Set (Random Selection)}} & \multicolumn{1}{c}{} & \multicolumn{1}{c}{} \\ 
\hline \hline
\multicolumn{1}{c}{} & \multicolumn{1}{c}{} & \multicolumn{5}{c}{$\delta$-Bias (in \textbf{bold}) of PSE-variants} & \multicolumn{1}{c}{} & \multicolumn{1}{c}{} \\ 
\hline  
\textbf{Models} & \textbf{PSE-RG-L} & \textbf{PSE-RG-SU4} & \textbf{PSE-METEOR} & \textbf{PSE-BLEU} & \textbf{PSE-JSD} & \textbf{PSE-BScore} & \textbf{PSE-InfoLM-$\alpha\beta$}\\
\hline
\text{BigBird-Pegasus} & 0.0009 & 0.0034 & 0.001 & 0.0031 & 0.0011 & 0.0015 & \textbf{0.0024} \\
\text{SimCLS} & 0.0034  & 0.007 & 0.0049 & \textbf{0.0054} & \textbf{0.0019} & \textbf{0.0017} & 0.0018 \\
\text{BRIO} & \textbf{0.0041} & \textbf{0.0072} & \textbf{0.0055} & 0.0051 & 0.001 & \textbf{0.0017} & 0.0013 \\
\text{ProphetNet} & 0.0033 & 0.0059 & 0.0041 & 0.0052 & 0.0002 & 0.0015 & 0.0017 \\
\text{T5 (Base)} & 0.0035 & 0.0049 & 0.0047 & 0.0032 & 0.0004 & 0.0016 & 0.0012 \\ \hdashline
\text{PENS-NAML T1} & 0.0033 & 0.0011 & 0.0027 & 0.0003 & 0.0001 & 0.0016 & 0.0012 \\
\text{PENS-NRMS T1} & 0.0029 & 0.0003 & 0.0033 & 0.0008 & 0.0003 & 0.0016 & 0.0006 \\
\text{PENS-EBNR T1} & 0.0035 & 0.0004 & 0.0039 & 0.0002 & 0.0001 & 0.0016 & 0.0008 \\
\text{PENS-NRMS T2}& 0.0038 & 0.0002 & 0.0042 & 0.0003 & 0.0001 & 0.0016 & 0.0003 \\ 
\text{PENS-EBNR T2} & 0.0036 & 0.0007 & 0.0042 & 0.0002 & 0 & 0.0015 & 0.0004 \\
\hline \hline
\multicolumn{1}{c}{} & \multicolumn{1}{c}{} & \multicolumn{5}{c}{$\delta$-Variance (in \textbf{bold}) of PSE-variants} & \multicolumn{1}{c}{} & \multicolumn{1}{c}{} \\ 
\hline  
\textbf{Models} & \textbf{PSE-RG-L} & \textbf{PSE-RG-SU4} & \textbf{PSE-METEOR} & \textbf{PSE-BLEU} & \textbf{PSE-JSD} & \textbf{PSE-BScore} & \textbf{PSE-InfoLM-$\alpha\beta$}\\
\hline
\text{BigBird-Pegasus} & 7.74E-07 & 1.17E-05 & 1.05E-06 & 9.43E-06 & 1.14E-06 & 2.36E-06 & \textbf{5.81E-06} \\
\text{SimCLS} & 1.14E-05 & 4.86E-05 & 2.44E-05 & \textbf{2.92E-05} & \textbf{3.64E-06} & \textbf{2.82E-06} & 3.08E-06\\
\text{BRIO} & \textbf{1.65E-05} & \textbf{5.25E-05} & \textbf{3.07E-05} & 2.61E-05 & 9.86E-07 & 2.75E-06 & 1.77E-06\\
\text{ProphetNet} & 1.09E-05 & 3.47E-05 & 1.69E-05 & 2.71E-05 & 3.20E-08 & 2.39E-06 & 2.90E-06\\
\text{T5 (Base)} & 1.21E-05 & 2.43E-05 & 2.17E-05 & 9.99E-06 & 1.54E-07 & 2.61E-06 & 1.33E-06\\ \hdashline
\text{PENS-NAML T1} & 1.12E-05 & 1.13E-06 & 7.51E-06 & 8.96E-08 & 4.00E-09 & 2.43E-06 & 1.41E-06\\
\text{PENS-NRMS T1} & 8.65E-06 & 9.84E-08 & 1.09E-05 & 6.14E-07 & 9.44E-08 & 2.43E-06 & 3.26E-07\\
\text{PENS-EBNR T1} & 1.22E-05 & 1.58E-07 & 1.53E-05 & 5.20E-08 & 6.40E-09 & 2.53E-06 & 6.00E-07\\
\text{PENS-NRMS T2} & 1.41E-05 & 5.44E-08 & 1.79E-05 & 1.02E-07 & 1.36E-08 & 2.41E-06 & 1.14E-07\\ 
\text{PENS-EBNR T2} & 1.30E-05 & 4.94E-07 & 1.75E-05 & 3.76E-08 & 0 & 2.31E-06 & 1.84E-07 \\
\hline \hline
\multicolumn{1}{c}{} & \multicolumn{1}{c}{} & \multicolumn{5}{c}{\textbf{Summary}} & \multicolumn{1}{c}{} & \multicolumn{1}{c}{} \\ 
\hline
\textbf{Models} & \textbf{PSE-RG-L} & \textbf{PSE-RG-SU4} & \textbf{PSE-METEOR} & \textbf{PSE-BLEU} & \textbf{PSE-JSD} & \textbf{PSE-BScore} & \textbf{PSE-InfoLM-$\alpha\beta$}\\
\hline
$\delta$-stability & 0.0041 & \textbf{0.0072} & 0.0055 & 0.0054 & 0.0019 & 0.0017 & 0.0024 \\
$\epsilon$-Spearman & 1 & 1 & 1 & 1 & 1 & 1 & 1\\
$\epsilon$-Kendall & 1 & 1 & 1 & 1 & 1 & 1 & 1\\
\hline
\end{tabular}
}
\caption{\textbf{\perseval$^{\beta=1.7}$ Stability}: 0.0072-strongly-stable w.r.t $\epsilon$-Spearman = 1; $\epsilon$-Kendall = 1 across variants}
\label{tab: extensive stability of Perseval}
\end{table*}

\section{Detailed Experimental Results}\label{app:detailed results}

We provide a detailed analysis of all the seven \perseval\ variants in terms of their stability performance in the following section.

\subsection{\perseval\ Stability Results} \label{app:detailed stability results}
In this section, we provide a detailed analysis of the stability performance of all the seven \perseval\ variants (in Section \ref{sec:stability results}, we discussed that of the best performing \perseval-InfoLM variant only). We analyze the $\delta$-bias and the $\delta$-variance of each variant across all the ten SOTA models that have been studied. We observe that while the best-performing variant w.r.t bias is \perseval-BertScore and w.r.t variance is \perseval-RG-L, the worst performances w.r.t both are pretty low with an overall 0.0072 $\delta$-stability across all the variants (see Table \ref{tab: extensive stability of Perseval}). We also observed a consistent 100\% rank-correlation (i.e., $\epsilon$-stability) across all the variants, showing \perseval to be extremely stable.

\subsection{\egises\ and \pacc\ Results}\label{app:egises & p-acc}

\begin{table}[h!]
    \centering
    \scalebox{0.75}{
    \begin{tabular}{lccccccc}
        \hline
        \textbf{Models} & \textbf{EG-RG-L} & \textbf{EG-RG-SU4} & \textbf{EG-METEOR} & \textbf{EG-BLEU} & \textbf{EG-JSD} & \textbf{EG-BScore} & \textbf{EG-InfoLM-$\alpha\beta$} \\
        \hline
        \textbf{BigBird-Pegasus} & 0.324 & 0.097 & 0.356 & 0.128 & 0.387 & 0.390 & 0.33\\
        \textbf{SimCLS} & 0.429 & 0.268 & 0.501 & 0.376 & 0.512 & 0.472 & 0.28\\
        \textbf{BRIO} & 0.480 & 0.282 & 0.581 & 0.434 & 0.630 & 0.572 & 0.189\\
        \textbf{ProphetNet} & 0.548 & 0.431 & 0.602 & 0.501 & 0.608 & 0.577 & 0.289\\
        \textbf{T5 (Base)} & 0.592 & 0.474 & 0.642 & 0.551 & 0.641 & 0.619 & 0.225\\
        \hdashline
        \textbf{PENS-NAML T1} & 0.869 & 0.702 & 0.866 & 0.798 & 0.883 & 0.863 & 0.575\\
        \textbf{PENS-NRMS T1} & 0.891 & 0.745 & 0.887 & 0.832 & 0.901 & 0.886 & 0.654\\
        \textbf{PENS-EBNR T1} & 0.935 & 0.820 & 0.931 & 0.892 & 0.938 & 0.933 & 0.793\\
        \textbf{PENS-EBNR T2} & 0.986 & 0.879 & 0.983 & 0.958 & 0.981 & 0.986 & 0.907\\
        \textbf{PENS-NRMS T2} & 0.988 & 0.908 & 0.986 & 0.965 & 0.983 & 0.989 & 0.824\\
        \hline
    \end{tabular}
    }
    \caption{SOTA Responsiveness Benchmarking on Microsoft PENS Dataset w.r.t seven EGISES (EG) variants.}
    \label{tab:eg_acc_scores}
\end{table}

\begin{table}[h!]
    \centering
    \scalebox{0.7}{
    \begin{tabular}{lccccccc}
        \hline
        \textbf{Models} & \textbf{P-Acc-RG-L} & \textbf{P-Acc-RG-SU4} & \textbf{P-Acc-METEOR} & \textbf{P-Acc-BLEU} & \textbf{P-Acc-JSD} & \textbf{P-Acc-BScore} & \textbf{P-Acc-InfoLM-$\alpha\beta$} \\
        \hline
        \textbf{T5 (Base)} & 0.492 & 0.591 & 0.532 & 0.594 & 0.411 & 0.123 & 0.392\\
        \textbf{ProphetNet} & 0.481 & 0.580 & 0.524 & 0.588 & 0.393 & 0.121 & 0.368\\
        \textbf{BRIO} & 0.468 & 0.582 & 0.520 & 0.593 & 0.333 & 0.113 & 0.344\\
        \textbf{SimCLS} & 0.446 & 0.558 & 0.500 & 0.577 & 0.340 & 0.107 & 0.301\\
        \textbf{BigBird-Pegasus} & 0.412 & 0.517 & 0.438 & 0.508 & 0.304 & 0.119 & 0.382\\
        \hdashline
        \textbf{PENS-NAML T1} & 0.391 & 0.523 & 0.404 & 0.518 & 0.350 & 0.106 & 0.522\\
        \textbf{PENS-NRMS T1} & 0.385 & 0.518 & 0.398 & 0.512 & 0.344 & 0.104 & 0.510\\
        \textbf{PENS-EBNR T1} & 0.383 & 0.514 & 0.398 & 0.508 & 0.341 & 0.103 & 0.512\\
        \textbf{PENS-NRMS T2} & 0.379 & 0.507 & 0.394 & 0.501 & 0.338 & 0.101 & 0.492\\
        \textbf{PENS-EBNR T2} & 0.376 & 0.506 & 0.390 & 0.500 & 0.336 & 0.102 & 0.504\\
        \hline
    \end{tabular}
    }
    \caption{SOTA Personalized Accuracy Benchmarking on Microsoft PENS Dataset w.r.t P-Accuracy (P-Acc); P-Acc hyper-parameters: $\alpha$ = 0.5, $\beta$ = 1 (i.e., complete importance to lack of personalization).}
    \label{tab:p_acc_scores}
\end{table}

\newpage
\section{Survey Format: Human-Judgment Meta-evaluation of \perseval}\label{app:sample-survey}
In this section, we present the screenshot of the questionnaire designed for the survey for computing the human-judgment version of \perseval\ (\perseval-HJ). Two consecutive respondents evaluated the generated summary pairs of all ten benchmarked models.   

\begin{figure}[h]
    \centerline{\includegraphics[width=0.81\textwidth]{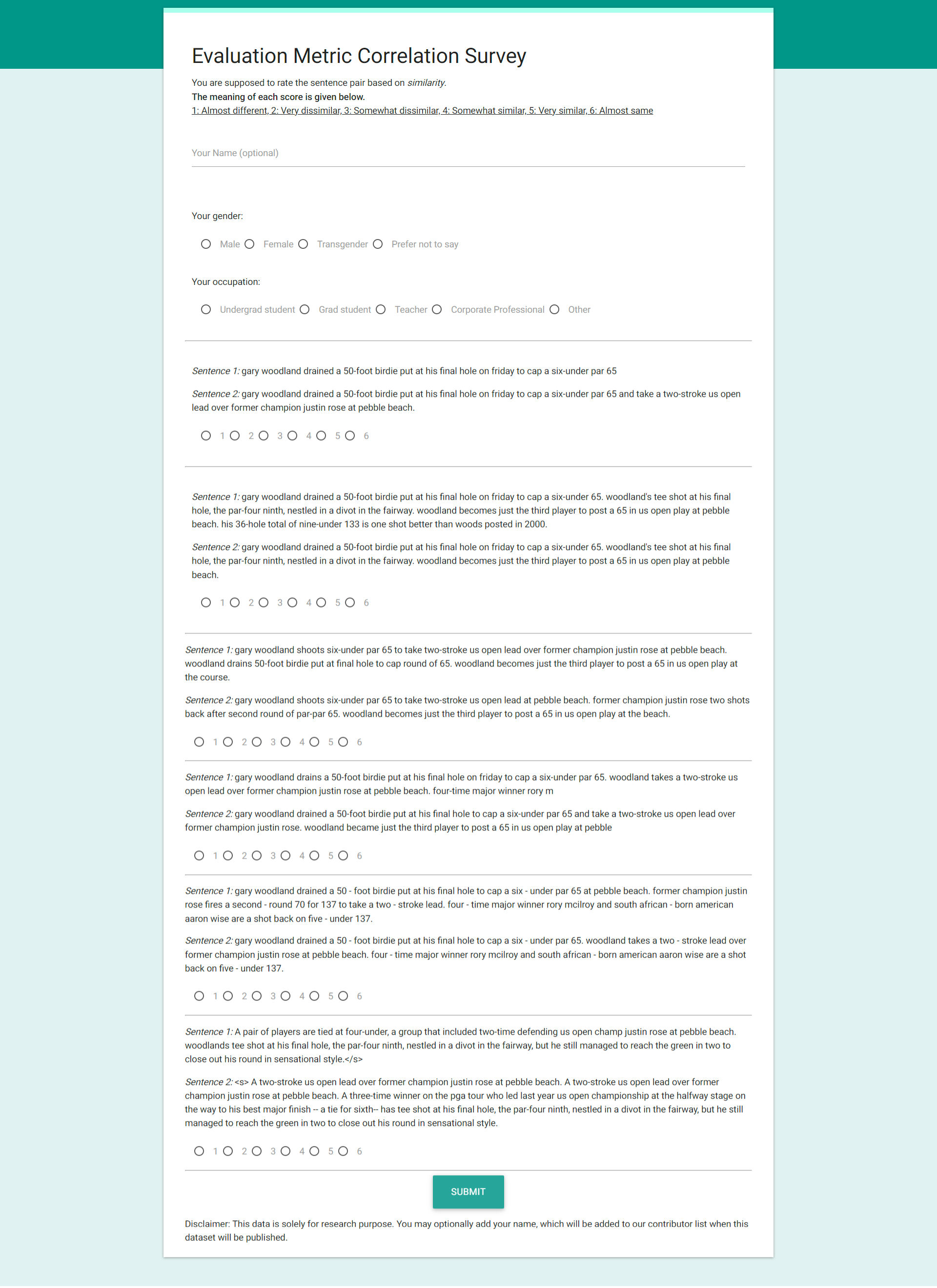}}
    \caption{\textbf{Sample Questionnaire}: Six pairs of summaries for a specific document; five pairs are model-generated summaries (each user evaluates five of the ten models) for a specific document, while one pair is user-generated gold reference).}
    \label{fig:sample-survey}
\end{figure}

\end{document}